\newcommand{\indep}{\raisebox{0.05em}{\rotatebox[origin=c]{90}{$\models$}}}
\theoremstyle{definition}\newtheorem{theorem}{Theorem}
\newtheorem{assumption}{Assumption}
\newtheorem{proposition}{Proposition}
\newtheorem{definition}{Definition}
\newtheorem{corollary}{Corollary}
\newtheorem{lemma}{Lemma}
\newtheorem{algorithm}{Algorithm}
\DeclareMathOperator*{\argmin}{\arg\!\min}
\newcommand{\blind}{1}
\begin{document}

\def\spacingset#1{\renewcommand{\baselinestretch}%
{#1}\small\normalsize} \spacingset{1}


\if1\blind
{
  \title{\bf Double Robustness for Complier Parameters and a Semiparametric Test for Complier Characteristics}
  \author{Rahul Singh \hspace{.2cm}\\
    Department of Economics, Massachusetts Institute of Technology,\\ Cambridge, Massachusetts 02142, U.S.A.\\
    and \\
    Liyang Sun \\
    Center for Monetary and Financial Studies,\\ Madrid, 28014, Spain }
  \maketitle
} \fi

\if0\blind
{
  \bigskip
  \bigskip
  \bigskip
  \begin{center}
    {\LARGE\bf Double Robustness for Complier Parameters and a Semiparametric Test for Complier Characteristics}
\end{center}
  \medskip
} \fi
\maketitle
 
\begin{abstract}
We propose a semiparametric test to evaluate (i) whether different instruments induce subpopulations of compliers with the same observable characteristics on average, and (ii) whether compliers have observable characteristics that are the same as the full population on average. The test is a flexible robustness check for the external validity of instruments. We use it to reinterpret the difference in LATE estimates that \cite{angrist1998children} obtain when using different instrumental variables. To justify the test, we characterize the doubly robust moment for \cite{abadie2003semiparametric}’s class of complier parameters, and we analyze a machine learning update to $\kappa$ weighting. 
\end{abstract}
\textit{keywords:} Instrumental Variable; Kappa Weight; Semiparametric Efficiency.

    \section{Introduction and related work}\label{sec:intro}






Average complier characteristics help to assess the external validity of any study that uses instrumental variable identification \citep{angrist1998children,angrist2013extrapolate,swanson2013commentary,baiocchi2014instrumental,marbach2020profiling}; whose treatment effects are we estimating when we use a particular instrument? We propose a semiparametric hypothesis test, free of functional form restrictions, to evaluate (i) whether two different instruments induce subpopulations of compliers with the same observable characteristics on average, and (ii) whether compliers have observable characteristics that are the same as the full population on average. It appears that no semiparametric test previously exists for this important question about the external validity of instruments, despite the popularity of reporting average complier characteristics in empirical research, e.g. \citet[Table 2]{abdulkadirouglu2014elite}. By developing this hypothesis test, we equip empirical researchers with a new robustness check.


Equipped with this new test, we replicate, extend, and test previous findings about the impact of childbearing on female labor supply. In a seminal paper, \cite{angrist1998children} use two different instrumental variables: twin births and same-sex siblings. The two instruments give rise to two substantially different local average treatment effect (LATE) estimates for the reduction in weeks worked due to a third child: -3.28 (0.63) and -6.36 (1.18), respectively, where the standard errors are in parentheses. \cite{angrist2013extrapolate} attribute the difference in LATE estimates to a difference in average complier characteristics, i.e. a difference in average covariates for instrument specific complier
subpopulations, writing that ``twins compliers therefore are relatively more
likely to have a young second-born and to be highly educated.'' We find weak evidence in favor of the explanation that twins compliers are more
likely to have a young second-born. We do not find evidence that twins compliers have a significantly different education level than same-sex compliers.

Our test is based on a new doubly robust estimator, which we call the automatic $\kappa$ weight (Auto-$\kappa$).  To prove the validity of the test, we characterize the doubly robust moment function for average complier characteristics, which appears to have been previously unknown. More generally, we study low dimensional complier parameters that are identified using a binary instrumental variable $Z$, which is valid conditional on a possibly high dimensional vector of covariates $X$. \cite{angrist1996identification} prove that identification of LATE based on the instrumental variable does not require any functional form restrictions. Using $\kappa$ weighting, \cite{abadie2003semiparametric} extends identification for a broad class of complier parameters. As our main theoretical result, we characterize the doubly robust moment function for this class of complier parameters by augmenting $\kappa$ weighting with the classic Wald formula. Our main result answers the open question posed by \cite{sloczynski2018general} of how to characterize the doubly robust moment function for the full class, and it generalizes the well known result of \cite{tan2006regression}, who characterizes the doubly robust moment function for LATE. By characterizing the doubly robust moment function for \cite{abadie2003semiparametric}'s class of complier parameters, we handle the new and economically important case of average complier characteristics. 

The doubly robust moment function confers many favorable properties for estimation. As its name suggests, it provides double robustness to misspecification \citep{robins1995semiparametric} as well as the mixed bias property \citep{chernozhukov2018original,rotnitzky2021characterization}. As such, it allows for estimation of models in which the treatment effect for different individuals may vary flexibly according to their covariates \citep{frolich2007nonparametric,ogburn_doubly_2015}.  It also allows for nonlinear models \citep{abadie2003semiparametric,cheng2009efficient}, which are often appropriate when outcome $Y$ and treatment $D$ are binary, and therefore avoids the issue of negative weights in misspecified linear models \citep{blandhol2022tsls}. Moreover, it allows for model selection of covariates and their transformations using machine learning, as emphasized in the targeted machine learning \citep{van2006targeted,zheng2011cross,luedtke2016statistical,van2018targeted} and debiased machine learning \citep{belloni2017program,chernozhukov2016locally,chernozhukov2018original,chernozhukov2021simple} literatures. A doubly robust estimator that combines both the $\kappa$ weight and Wald formulations not only guards against misspecification but also debiases machine learning. Finally, it is semiparametrically efficient in many cases \citep{hasminskii1979nonparametric,robinson1988root,bickel1993efficient,newey1994asymptotic,robins1995semiparametric,hong2010semiparametric}. 


The structure of the paper is as follows. Section~\ref{sec:framework} defines the class of complier parameters from \cite{abadie2003semiparametric}. Section~\ref{sec:insight} summarizes our main insight: the doubly robust moment for a complier parameter combines the familiar Wald and $\kappa$ weight formulations. Section~\ref{sec:double} formalizes this insight for the full class of complier parameters. Section~\ref{sec:test} develops the practical implication of our main insight: a semiparametric test to evaluate differences in observable complier characteristics, which we use to revisit \cite{angrist1998children}. Section~\ref{sec:conc} concludes. Appendix~\ref{sec:estimation} proposes a machine learning estimator that we call the automatic $\kappa$ weight (Auto-$\kappa$), which we use to implement our proposed test.

This paper was previously circulated under a different title \citep{singh2019biased}.
\section{Framework}\label{sec:framework}

Suppose we are interested in the effect of a binary treatment $D$ on a continuous outcome $Y$ in $\mathcal{Y}$, a subset of $\mathbb{R}$. There is a binary instrumental variable $Z$ available, as well as a potentially high dimensional covariate $X$ in $\mathcal{X}$, a subset of $\mathbb{R}^{dim(X)}$. We observe $n$ independent and identically distributed observations $(W_i)$, $(i=1,...,n)$, where $W=(Y,D,Z,X^{\top})^{\top}$ concatenates the random variables. Following the notation of \cite{angrist1996identification}, we denote by $Y^{(z,d)}$ the potential outcome under the intervention $Z=z$ and $D=d$. We denote by $D^{(z)}$ the potential treatment under the intervention $Z=z$. Compliers are the subpopulation for whom $D^{(1)}>D^{(0)}$. We place standard assumptions for identification.


\begin{assumption}[Instrumental variable identification] \label{assumption:id}
Assume
\begin{enumerate}
    \item Independence: $\{Y^{(z,d)}\},\{D^{(z)}\} \indep Z\mid X$ for $d=0,1$ and $z=0,1$.
    \item Exclusion: $\text{\normalfont pr}\{Y^{(1,d)}=Y^{(0,d)}\mid X\}=1$ for $d=0,1$.
    \item Overlap: $\pi_0(X)=\text{\normalfont pr}(Z=1\mid X)$ is in $(0,1)$.
    \item Monotonicity: $\text{\normalfont pr}\{D^{(1)} \geq D^{(0)}\mid X\}=1$ and $\text{\normalfont pr}\{D^{(1)} > D^{(0)}\mid X\}>0$.
\end{enumerate}
\end{assumption}

Independence states that the instrument $Z$ is as good as randomly assigned conditional on covariates $X$. Exclusion imposes that the instrument $Z$ only affects the outcome $Y$ via the treatment $D$. We can therefore simplify notation: $Y^{(d)}=Y^{(1,d)}=Y^{(0,d)}$. Overlap ensures that there are no covariate values for which the instrument assignment is deterministic. Monotonicity rules out the possibility of defiers: individuals who will always pursue an opposite treatment status from their instrument assignment. 

\cite{angrist1996identification} prove identification of the local average treatment effect (LATE) using Assumption~\ref{assumption:id}. \cite{abadie2003semiparametric} extends identification for a broad class of complier parameters.

\begin{definition}[General class of complier parameters \citep{abadie2003semiparametric}]\label{def:class}
Let $g(y,d,x,\theta)$ be a measurable, real valued function such that $E\{g(Y,D,X ,\theta)^2\}<\infty$ for all $\theta$ in $\Theta$. Consider complier parameters $\theta_0$ implicitly defined by any of the following expressions:
\begin{enumerate}
    \item $E\{ g(Y^{(0)},X, \theta)\mid D^{(1)}>D^{(0)}\} =0$ if and only if $\theta=\theta_0$;
    \item $E\{ g(Y^{(1)},X ,\theta)\mid D^{(1)}>D^{(0)}\} =0$  if and only if $\theta=\theta_0$;
    \item $E\{g(Y,D,X,\theta)\mid D^{(1)}>D^{(0)}\}=0$  if and only if $\theta=\theta_0$.
\end{enumerate}
We subsequently refer to these expressions as the three possible cases for complier parameters.
\end{definition}

For a given instrumental variable $Z$, one may define the average complier characteristics as a special case of Definition~\ref{def:class}. This causal parameter summarizes the observable characteristics of the subpopulation of compliers who are induced to take up or refuse treatment $D$ based on the instrument assignment $Z$. It is an important parameter to estimate because it aids the interpretation of LATE. As we will see in Section~\ref{sec:test}, this causal parameter can help to reconcile different LATE estimates obtained with different instruments.
\begin{definition}[Average complier characteristics]\label{def:characteristics}
Average complier characteristics are $\theta_0=E\{f(X)\mid D^{(1)}>D^{(0)}\}$ for any measurable function $f$ of covariate $X$ that may have a finite dimensional, real vector value such that $E\{f_j(X)^2\}<\infty$.
\end{definition}

\section{Key insight}\label{sec:insight}

\subsection{Classic approaches: Wald formula and $\kappa$ weight} 

We provide intuition for our key insight that a doubly robust moment for a complier parameter has two components: the Wald formula and the $\kappa$ weight. For clarity, we focus on the familiar example of local average treatment effect (LATE) in this initial discussion: $\theta_0=E\{Y^{(1)}-Y^{(0)}\mid D^{(1)}>D^{(0)}\}$. In subsequent sections, we study the entire class of complier parameters in Definition~\ref{def:class}, including the new case of average complier characteristics.

Under Assumption~\ref{assumption:id}, LATE can be identified as 
$$
    \theta_0
=\frac{E\left\{E(Y\mid Z=1,X)-E(Y\mid Z=0,X)\right\}}{E\left\{E(D\mid Z=1,X)-E(D\mid Z=0,X)\right\}} 
$$
following \citet[Theorem 1]{frolich2007nonparametric}. We call this expression the expanded Wald formula. 

The direct Wald approach involves estimating the reduced form regression $E(Y\mid Z,X)$ and first stage regression $E(D\mid Z,X)$, then plugging these estimates into the expanded Wald formula. Such an approach is called the plug-in, and it is valid only when both regressions are estimated with correctly specified and unregularized models. It is not a valid approach when either regression is incorrectly specified, leading to the name ``forbidden regression'' \citep{angrist2008mostly}. It is also invalid when the covariates are high dimensional and a regularized machine learning estimator is used to estimate either regression. The matching procedure of \cite{frolich2007nonparametric} faces similar limitations.

In seminal work, \cite{abadie2003semiparametric} proposes an alternative formulation in terms of the $\kappa$ weights 
$$
\kappa^{(0)}(W)= (1-D)\frac{(1-Z)-\{1-\pi_0(X)\}}{\{1-\pi_0(X)\}\pi_0(X)},\quad
    \kappa^{(1)}(W)= D\frac{Z - \pi_0(X)}{\{1-\pi_0(X)\}\pi_0(X)}
$$
where $\pi_0(X)=\text{\normalfont pr}(Z=1\mid X)$ is the instrument propensity score.
The $\kappa$ weights have the property that
$$
\theta_0=\omega^{-1}E\{\kappa^{(1)}(W) Y-\kappa^{(0)}(W) Y\},\quad \omega= E\left\{1-\frac{D(1-Z)}{1-\pi_0(X)}-\frac{(1-D)Z}{\pi_0(X)}\right\}.
$$
In words, the mean of the product of $Y$ and $\kappa^{(d)}(W)$ gives, up to a scaling, the expected potential outcome $Y^{(d)}$ of compliers when treatment is $D=d$. As an aside, \cite{abadie2003semiparametric} also introduces a third weight $\kappa(W)$ for parameters that belong to the third case in Definition~\ref{def:class}.

The $\kappa$ weight approach would involve estimating the propensity score $\hat{\pi}$ and plugging this estimate into the $\kappa$ weight formula. Intuitively, the $\kappa$ weight approach is like a multistage inverse propensity weighting. Impressively, it remains agnostic about the functional form of the reduced form regression $E(Y\mid Z,X)$ and first stage regression $E(D\mid Z,X)$. It is valid only when $\hat{\pi}$ is estimated with a correctly specified and unregularized model. It is invalid if $\hat{\pi}$ is incorrectly specified or if covariates are high dimensional and a regularized machine learning estimator is used to estimate $\hat{\pi}$. Moreover, the inversion of $\hat{\pi}$ can lead to numerical instability in high dimensional settings.

\subsection{Doubly robust moment for a special case}

Next, we introduce the moment function and doubly robust moment function formulations of LATE. For the special case of LATE, these formulations were first derived by \cite{tan2006regression} with the goal of addressing misspecification of the regressions and the propensity score. Consider the expanded Wald formula. Rearranging and using the notation $V=(Y,D)^{\top}$ as a column vector,
$
\gamma_0(Z,X)=E(V\mid Z,X)$ as a vector valued regression, and $\begin{pmatrix}1, & -\theta \end{pmatrix}$ as a row vector,
we arrive at the moment function formulation of LATE:
$$
E\left[
\begin{pmatrix}1, & -\theta \end{pmatrix}
\{\gamma_0(1,X)-\gamma_0(0,X)\}
\right]
=0\text{ if and only if }\theta=\theta_0.
$$
Denote the the Horvitz-Thompson balancing weight as
$$
\alpha_0(Z,X)=\frac{Z}{\pi_0(X)}-\frac{1-Z}{1-\pi_0(X)},\quad \pi_0(X)=\text{\normalfont pr}(Z=1\mid X).
$$
\cite{tan2006regression} shows that for LATE, the doubly robust moment function is
$$
E\left[
\begin{pmatrix}1, & -\theta \end{pmatrix}
\{\gamma_0(1,X)-\gamma_0(0,X)\}
+\alpha_0(Z,X)
\begin{pmatrix}1, & -\theta \end{pmatrix}
\{V-\gamma_0(Z,X)\}\right]
=0\text{ if and only if }\theta=\theta_0.
$$
The doubly robust formulation remains valid if either the vector valued regression $\gamma_0$ or propensity score $\pi_0$ is incorrectly specified.

\subsection{A new synthesis that allows for machine learning}

Our key observation is the connection between the $\kappa$ weight and the balancing weight $\alpha_0$. This simple observation will allow us to characterize the doubly robust moment function for a broad class of complier parameters, generalizing \cite{tan2006regression} to the full class defined by \cite{abadie2003semiparametric}. 
\begin{proposition}[$\kappa$ weight as balancing weight]\label{rewrite_kappa}
The $\kappa$ weights can be rewritten as
\begin{align*}
    &\kappa^{(0)}(W)=\alpha_0(Z,X)(D-1),\quad 
    \kappa^{(1)}(W)=\alpha_0(Z,X) D,\quad 
    \kappa(W)=1-\frac{D(1-Z)}{1-\pi_{0}(X)}-\frac{(1-D)Z}{\pi_{0}(X)}.
\end{align*}
\end{proposition}

\begin{proof}
Observe that 
$$\alpha_0(z,x)=\frac{z}{\pi_0(x)}-\frac{1-z}{1-\pi_0(x)}=\frac{z-\pi_0(x)}{\pi_0(x)\{1-\pi_0(x)\}}$$
which proves the expression for $\kappa^{(0)}$ and $\kappa^{(1)}$. Using these expressions, we have 
$$\kappa(w)= \{1-\pi_0(x)\}\alpha_0(z,x)(d-1)+\pi_0(x)\alpha_0(z,x) d=1-\frac{d(1-z)}{1-\pi_{0}(x)}-\frac{(1-d)z}{\pi_{0}(x)}.$$\end{proof}

Next, we formalize the sense in which the balancing weight $\alpha_0$ represents the functional $\gamma\mapsto E\left\{\begin{pmatrix}1, & -\theta \end{pmatrix}\gamma(1,X)-\gamma(0,X)\right\}$ that appears in the moment formulation of LATE and the extended Wald formula. 

\begin{proposition}[Balancing weight as Riesz representer]\label{prop:rr}
$\alpha_0(z,x)$ is the Riesz representer to the continuous linear functional $\gamma\mapsto E\{\gamma(1,X)-\gamma(0,X)\}$, i.e. for all $\gamma$ such that $E\{\gamma(Z,X)^2\}<\infty$,
$$
E\{\gamma(1,X)-\gamma(0,X)\}=E\{\alpha_0(Z,X)\gamma(Z,X)\}.
$$
Similarly, $Z/\pi_0(X)$ is the Riesz representer to the continuous linear functional $\gamma\mapsto E\{\gamma(1,X)\}$, and $(1-Z)/\{1-\pi_0(X)\}$ is the Riesz representer to the continuous linear functional $\gamma\mapsto E\{\gamma(0,X)\}$.
\end{proposition}

\begin{proof}
This result is well known in semiparametrics. We provide the proof for completeness. Observe that
\begin{align*}
     E\left\{\gamma(Z,X) \frac{Z}{\pi_0(X)}\mid X\right\} 
     &= E\left\{\gamma(Z,X) \frac{1}{\pi_0(X)}\mid Z=1,X\right\}\text{\normalfont pr}(Z=1\mid X) \\
     &= E\left\{\gamma(Z,X) \frac{1}{\pi_0(X)}\mid Z=1,X\right\}\pi_0(X)
     =\gamma(1,X)
 \end{align*}
 and likewise
  $$
 E\left\{\gamma(Z,X) \frac{1-Z}{1-\pi_0(X)}\mid X\right\}=\gamma(0,X).
 $$ 
 Combining these two terms, we have by the law of iterated expectations
 \begin{align*}
     &E\{\gamma(1,X)-\gamma(0,X)\}
     = \int \{\gamma(1,x)-\gamma(0,x)\} \mathrm{d}\text{\normalfont pr}(x) \\
     &=\int \left[E\left\{\gamma(Z,X) \frac{Z}{\pi_0(X)}\mid X=x\right\}-E\left\{\gamma(Z,X) \frac{1-Z}{1-\pi_0(X)}\mid X=x\right\}\right] \mathrm{d}\text{\normalfont pr}(x) \\
     &=E\left\{\gamma(Z,X)\frac{Z}{\pi_0(X)}\right\}-E\left\{\gamma(Z,X)\frac{1-Z}{1-\pi_0(X)}\right\}.
 \end{align*}
\end{proof}
An immediate consequence of Proposition~\ref{prop:rr} is that
$$
    E\left\{\begin{pmatrix}1, & -\theta \end{pmatrix}\gamma(1,X)-\gamma(0,X)\right\}=E\left\{\alpha_0(Z,X)\begin{pmatrix}1, & -\theta \end{pmatrix}\gamma(Z,X)\right\} \text{ for any } \gamma.
$$

In summary, Proposition~\ref{rewrite_kappa} shows that the $\kappa$ weight is a reparametrization of the balancing weight $\alpha_0$. Meanwhile, Proposition~\ref{prop:rr} shows that the balancing weight appears in the Riesz representer to the moment formulation of LATE, i.e. the expanded Wald formula. We conclude that the $\kappa$ weight is essentially the Riesz representer to the Wald formula.  In seminal work, \cite{newey1994asymptotic} demonstrates that a doubly robust moment is constructed from a moment formulation and its Riesz representer.  Therefore the doubly robust moment for complier parameters must combine the Wald formula and the $\kappa$ weight.

With the general doubly robust moment function, one can propose flexible, semiparametric tests for complier parameters. In particular, the semiparametric tests may involve regularized machine learning for flexible estimation and model selection of (i) the regression $\hat{\gamma}$ in a way that approximates nonlinearity and heterogeneity, and (ii) the balancing weight $\hat{\alpha}$ in a way that guarantees balance. In Section~\ref{sec:test}, we instantiate such a test to compare observable characteristics of compliers. 

As explained in Appendix~\ref{sec:estimation}, we avoid the numerically unstable step of estimating and inverting $\hat{\pi}$ that appears in \cite{tan2006regression,belloni2017program,chernozhukov2018original}. We replace it with the numerically stable step of estimating $\hat{\alpha}$ directly, extending techniques of \cite{chernozhukov2018learning} to the instrumental variable setting. We call this extension automatic $\kappa$ weighting (Auto-$\kappa$), and demonstrate how it applies to the new and economically important case of average complier characteristics.

In summary, our main theoretical result allows us to combine the classic Wald and $\kappa$ weight formulations for the entire class of complier parameters in Definition~\ref{def:class}, including average complier characteristics, while also updating them to incorporate machine learning.

\section{The doubly robust moment}\label{sec:double}

We now state our main theoretical result, which is the doubly robust moment for the class of complier parameters in Definition~\ref{def:class}. This result formalizes the intuition of Section~\ref{sec:insight}, and it justifies the hypothesis test in Section~\ref{sec:test}. It is convenient to divide the main result into two statements for clarity. Theorem~\ref{thm:general_kappa} handles the first and second cases in Definition~\ref{def:class}, while Theorem~\ref{thm:general_kappa3} handles the third case in Definition~\ref{def:class}.

\begin{theorem}[Cases 1 and 2]\label{thm:general_kappa}
Suppose Assumption~\ref{assumption:id} holds. Let $g(y,d,x,\theta)$ be a measurable, real valued function such that $E\{g(Y,D,X ,\theta)^2\}<\infty$ for all $\theta$ in $\Theta$. 
\begin{enumerate}
    \item If $\theta_0$ is defined by $E[ g\{Y^{(0)},X ,\theta_0\}\mid D^{(1)}>D^{(0)}] =0$, let 
    $v(w,\theta)=(d-1) g(y,x,\theta).$
    \item If $\theta_0$ is defined by $E[ g\{Y^{(1)},X ,\theta_0\}\mid D^{(1)}>D^{(0)}] =0$, let 
    $v(w,\theta)=d  g(y,x,\theta).$
\end{enumerate}
Then the doubly robust moment function $\psi$ for $\theta_0$ is of the form
\begin{align*}
    &\psi(w,\gamma,\alpha,\theta)=m(w,\gamma,\theta)+\phi(w,\gamma,\alpha,\theta),\quad 
    m(w,\gamma,\theta)=\gamma(1,x,\theta)-\gamma(0,x,\theta),\\
    &\phi(w,\gamma,\alpha,\theta)=\alpha(z,x)\{v(w,\theta)-\gamma(z,x, \theta)\}
\end{align*}  
where $\gamma_0(z,x,\theta)=E\{v(W,\theta)\mid z,x\}$ is a vector valued regression and $\alpha_0(z,x)=z/\pi_0(x)-(1-z)/\{1-\pi_0(x)\}$ is the Riesz representer of the functional $\gamma\mapsto E\{\gamma(1,X,\theta)-\gamma(0,X,\theta)\}$.
\end{theorem}

\begin{proof}
Consider the first case. Under Assumption~\ref{assumption:id}, we can appeal to \citet[Theorem 3.1]{abadie2003semiparametric}:
    $$
    0=E[ g\{Y^{(0)},X ,\theta_0\}\mid D^{(1)}>D^{(0)}]=\dfrac{E\{\kappa ^{(0)}(W)g(Y,X ,\theta_0)\}}{\text{\normalfont pr}\{D^{(1)}>D^{(0)}\}}.
    $$
    Hence 
  \begin{align*}
      0&=E\{\kappa ^{(0)}(W)g(Y,X ,\theta_0)\} 
      =E\{\alpha_0(Z,X)(D-1)g(Y,X ,\theta_0)\}
      =E\{\alpha_0(Z,X)v(W,\theta_0)\} \\
      &=E\{\alpha_0(Z,X)\gamma_0(Z,X,\theta_0)\} 
      =E\{\gamma_0(1,X,\theta_0)-\gamma_0(0,X,\theta_0)\}
  \end{align*}
appealing to the previous statement, Proposition~\ref{rewrite_kappa}, the definition of $v(W,\theta_0)$, the law of iterated expectations, and Proposition~\ref{prop:rr}. Likewise for the second case.
\end{proof}

In the doubly robust moment function $\psi(w,\gamma,\alpha,\theta)=m(w,\gamma,\theta)+\phi(w,\gamma,\alpha,\theta)$, we generalize our insight from Section~\ref{sec:insight}. The first term $m(w,\gamma,\theta)$ is essentially a generalized Wald formula. The second term $\phi(w,\gamma,\alpha,\theta)$ is essentially a product between the $\kappa$ weight and a generalized regression residual. In the language of semiparametrics, we \textit{augment} the $\kappa$ weight with the Wald formula. Equivalently, we \textit{debias} the Wald formula with the $\kappa$ weight.

The doubly robust moment function $\psi$ remains valid if either $\gamma_0$ or $\alpha_0$ is misspecified, i.e.
$$
0=E\{\psi(W,\gamma,\alpha_0,\theta_0)=E[\psi(W,\gamma_0,\alpha,\theta_0)\} \text{ for any } \gamma,\alpha.
$$
In the former expression, $\gamma_0$ may be misspecified yet $\psi$ remains valid as an estimating equation. In the latter, $\alpha_0$ may be misspecified yet $\psi$ remains valid as an estimating equation. Theorem~\ref{thm:general_kappa} demonstrates that all complier parameters in cases 1 and 2 of Definition~\ref{def:class} have a doubly robust moment function $\psi$ with a common structure. As such, we are able to analyze all of these causal parameters with the same argument. Case 3 of Definition~\ref{def:class} is more involved, but we show that it shares the common structure as well.

\begin{theorem}[Case 3]\label{thm:general_kappa3}
Suppose Assumption~\ref{assumption:id} holds. Let $g(y,d,x,\theta)$ be a measurable, real valued function such that $E\{g(Y,D,X ,\theta)^2\}<\infty$ for all $\theta$ in $\Theta$. 
If $\theta_{0}$ is defined by the moment condition $E\{g(Y,D,X,\theta_{0})\mid D^{(1)}>D^{(0)}\}=0$, 
     then the doubly robust moment function for $\theta_{0}$ is
of the form 
\begin{align*}
&\psi(w,\tilde{\gamma},\tilde{\alpha},\theta) =m(w,\tilde{\gamma},\theta)+\phi(w,\tilde{\gamma},\tilde{\alpha},\theta),\quad 
m(w,\tilde{\gamma},\theta) =\gamma(z,x,\theta)-\gamma^{0}(1,x,\theta)-\gamma^{1}(0,x,\theta)\\
&\phi(w,\tilde{\gamma},\tilde{\alpha},\theta) =\{g(y,d,x,\theta)-\gamma(z,x,\theta)\} -\alpha^0(z,x)\{(1-d) g(y,d,x,\theta)-\gamma^{0}(z,x,\theta)\}\\
 &\quad\quad\quad\quad\quad\quad -\alpha^1(z,x)\{d g(y,d,x,\theta)-\gamma^{1}(z,x,\theta)\}
\end{align*}
 where $\tilde{\gamma}$ concatenates $(\gamma,\gamma^0,\gamma^1)$ and $\tilde{\alpha}$ concatenates $(\alpha^0,\alpha^1)$. These functions are defined by
 \begin{align*}
     &\gamma_{0}(z,x,\theta)=E\{g(Y,D,X,\theta)\mid z,x\},\quad \gamma_{0}^{0}(z,x,\theta)=E\{(1-D) g(Y,D,X,\theta)\mid z,x\},\\ &\gamma_{0}^{1}(z,x,\theta)=E\{D g(Y,D,X,\theta)\mid z,x\},\quad \alpha_0^0(z,x)=z/\pi_0(x),\quad  \alpha_0^1(z,x)=(1-z)/\{1-\pi_0(x)\}.
 \end{align*}
\end{theorem}

\begin{proof}
A similar argument extends to the third case. Under Assumption~\ref{assumption:id},
we can appeal to \citet[Theorem 3.1]{abadie2003semiparametric}:
$$
0=E\{g(Y,D,X,\theta_{0})\mid D^{(1)}>D^{(0)}\}=\dfrac{E\{\kappa(W)g(Y,D,X,\theta_{0})\}}{\text{\normalfont pr}\{D^{(1)}>D^{(0)}\}}.
$$
 Hence 
\begin{align*}
0 & =E\{\kappa(W)g(Y,D,X,\theta_{0})\}\\
 & =
 E\left\{g(Y,D,X,\theta_{0})
 -\frac{Z}{\pi_{0}(X)}(1-D) g(Y,D,X,\theta_{0})
 -\frac{1-Z}{1-\pi_{0}(X)}D g(Y,D,X,\theta_{0})\right\}\\
  & =E\left\{\gamma_{0}(Z,X,\theta_0)-\frac{Z}{\pi_{0}(X)}\gamma_{0}^{0}(Z,X,\theta_{0})-\frac{1-Z}{1-\pi_{0}(X)}\gamma_{0}^{1}(Z,X,\theta_{0})\right\}\\
 & =E\{\gamma_{0}(Z,X,\theta_0)-\gamma_{0}^{0}(1,X,\theta_{0})-\gamma_{0}^{1}(0,X,\theta_{0})\}
\end{align*}
 appealing to the previous statement, Proposition~\ref{rewrite_kappa}, the definitions of $(\gamma_0,\gamma_0^0,\gamma_0^1)$ together with the law of iterated expectations, and Proposition~\ref{prop:rr}.
\end{proof}

This time, the doubly robust moment function $\psi$ remains valid if either $\tilde{\gamma}_0$ or $\tilde{\alpha}_0$ is misspecified, i.e.
$$
0=E\{\psi(W,\tilde{\gamma},\tilde{\alpha}_0,\theta_0)=E[\psi(W,\tilde{\gamma}_0,\tilde{\alpha},\theta_0)\} \text{ for any } \tilde{\gamma},\tilde{\alpha}.
$$
In the former expression, $\tilde{\gamma}_0$ may be misspecified yet $\psi$ remains valid as an estimating equation. In the latter, $\tilde{\alpha}_0$ may be misspecified yet $\psi$ remains valid as an estimating equation. 

In Section~\ref{sec:test}, we translate this general characterization of the doubly robust moment into a practical hypothesis test to evaluate the external validity of instruments. In Appendix~\ref{sec:estimation}, we translate this general characterization into general machine learning estimators for complier parameters, which we use to implement the hypothesis test. In particular, we consider direct estimation of the balancing weight, a procedure that we call automatic $\kappa$ weighting (Auto-$\kappa$).
\section{A hypothesis test to compare observable characteristics}\label{sec:test}

\subsection{Corollaries for average complier characteristics}

As a corollary, we characterize the doubly robust moment for average complier characteristics, which appears to have been previously unknown. Using the new doubly robust moment, we propose a hypothesis test, free of functional form restrictions, to evaluate (i) whether two different instruments induce subpopulations of compliers with the same observable characteristics on average, and (ii) whether compliers have observable characteristics that are the same as the full population on average.

\begin{corollary}[Average complier characteristics]\label{cor:characteristics}
The doubly robust moment for average complier characteristics is
$$
 \psi(w,\gamma,\alpha,\theta)=A(\theta)\{\gamma(1,x)-\gamma(0,x)\}+\alpha(z,x)A(\theta)\{v-\gamma(z,x)\},\quad A(\theta)=\begin{pmatrix}I, & -\theta \end{pmatrix}
$$
where
$
v=\{df(x)^{\top}, d\}^{\top}$, $\gamma_0(z,x)=E(V\mid z,x)$, and $\alpha_0(z,x)=z/\pi_0(x)-(1-z)/\{1-\pi_0(x)\}$.
\end{corollary}

\begin{proof}
The result is a special case of Corollary~\ref{cor:LATE} in Appendix~\ref{sec:estimation}.
\end{proof}

Suppose we wish to test the null hypothesis that two different instruments $Z_1$ and $Z_2$ induce complier subpopulations with the same observable characteristics on average. Denote by $\hat{\theta}_1$ and $\hat{\theta}_2$ the estimators for average complier characteristics using the different instruments $Z_1$ and $Z_2$, respectively. One may construct machine learning estimators $\hat{\theta}_1$ and $\hat{\theta}_2$ based on the doubly robust moment function in Corollary~\ref{cor:characteristics}. In Appendix~\ref{sec:estimation}, we instantiate automatic $\kappa$ weight (Auto-$\kappa$) estimators of this type. The following procedure allows us to test the null hypothesis from some estimator $\hat{C}$ for the asymptotic variance $C$ of $\hat{\theta}=(\hat{\theta}^{\top}_1,\hat{\theta}^{\top}_2)^{\top}$. In Appendix~\ref{sec:estimation}, we provide an explicit variance estimator $\hat{C}$ based on Auto-$\kappa$ as well.

\begin{algorithm}[Hypothesis test for difference of average complier characteristics]\label{alg:hyp}
Given $\hat{\theta}$ and $\hat{C}$, which may be based on Auto-$\kappa$ as in Appendix~\ref{sec:estimation},
\begin{enumerate}
    \item Calculate the statistic $T=n(\hat{\theta}_1-\hat{\theta}_2)^{\top}(R
    \hat{C}
    R^{\top})^{-1}(\hat{\theta}_1-\hat{\theta}_2)$ where $ R=\begin{pmatrix}I, & -I\end{pmatrix}$.
    \item Compute the value $c_{a}$ as the $(1-a)$ quantile of $\chi^2\{dim(\theta_1)\}$.
    \item Reject the null hypothesis if $T>c_{a}$.
\end{enumerate}
\end{algorithm}

Algorithm~\ref{alg:hyp} can also test the null hypothesis that compliers have observable characteristics that are the same as the full population on average. $\hat{\theta}_1$ is as before, $\hat{\theta}_2=n^{-1}\sum_{i=1}^nf(X_i)$, and $\hat{C}$ updates accordingly.

\begin{corollary}[Hypothesis test for difference of average complier characteristics]\label{cor:hyp}
If $\hat{\theta}=\theta_0+o_p(1)$,  $n^{1/2}(\hat{\theta}-\theta_0)\rightsquigarrow \mathcal{N}(0,C)$, and $\hat{C}=C+o_p(1)$, then the hypothesis test in Algorithm~\ref{alg:hyp} 
falsely rejects the null hypothesis $H_0$ with
probability approaching the nominal level, i.e.
$
\text{\normalfont pr}(T>c_{a}\mid H_0)\rightarrow a.
$
\end{corollary}

\begin{proof}
The result is immediate from \citet[Section 9]{newey1994large}.
\end{proof}

Corollary~\ref{cor:hyp} is our main practical result: justification of a flexible hypothesis test to evaluate a difference in average complier characteristics. It appears that no semiparametric test previously exists for this important question about the external validity of instruments. By developing this hypothesis test, we equip empirical researchers with a new robustness check. This practical result follows as a consequence of our main insight in Section~\ref{sec:insight} and our main theoretical result in Section~\ref{sec:double}. In Appendix~\ref{sec:estimation}, we verify the conditions of Corollary~\ref{cor:hyp} for Auto-$\kappa$ under weak regularity assumptions.

\subsection{Empirical application}

With this practical result, we revisit a classic empirical paper in labor economics to test whether two different instruments induce different average complier characteristics. \cite{angrist1998children} estimate the impact of childbearing $D$ on female labor supply $Y$ in a sample of 394,840 mothers, aged 21--35 with at least two children, from the 1980 Census. The first instrument $Z_1$ is twin births: $Z_1$ indicates whether the mother's second and third children were twins. The second instrument $Z_2$ is same-sex siblings: $Z_2$ indicates whether the mother's initial two children were siblings with the same sex. The authors reason that both $(Z_1,Z_2)$ are quasi random events that induce having a third child.

\begin{table}[h]
\caption{Comparison of average complier characteristics}{%
\begin{tabular}{lcccccccc}
 \\
 & \multicolumn{4}{c}{Average age of second child} & \multicolumn{4}{c}{Average schooling of mother} \\  
 & Twins & Same-sex &  2 sided & 1 sided & Twins & Same-sex &  2 sided & 1 sided\\[5pt]
$\kappa$ weight & 5.51 & 7.14 & - & - & 12.43 & 12.09 & - & - \\
Auto-$\kappa$ & 4.52 & 6.92 & 0.13 & 0.07 & 9.84 & 12.10 & 0.54 & 0.27 \\
Auto-$\kappa $ (S.E.) & (0.70) & (1.43) & - & - & (2.47) & (2.78) & - & -
\end{tabular}}
\label{tab:1}
\textit{Notes:}
S.E., standard error; Auto-$\kappa$, automatic $\kappa$ weighting. See Supplement~\ref{sec:application} for estimation details.
\end{table}

The two instruments give rise to two LATE estimates for the reduction in weeks worked due to a third child: -3.28 (0.63) for $Z_1$ and -6.36 (1.18) for $Z_2$, where the standard errors are in parentheses. \cite{angrist2013extrapolate} attribute the difference in LATE estimates to a difference in average complier characteristics, i.e. a difference in average covariates for instrument specific complier
subpopulations. The authors use parametric $\kappa$ weights, report point estimates without standard errors, and conclude that
``twins compliers therefore are relatively more
likely to have a young second-born and to be highly educated.'' 

We replicate, extend, and test these previous findings. In their parametric $\kappa$ weight approach, \cite{angrist2013extrapolate} estimate $\pi_{0}(X)$ using a logistic model with polynomials of continuous covariates. In our semiparametric Auto-$\kappa$ approach, we expand the dictionary to higher order polynomials, include interactions between the instrument and covariates, and directly estimate and regularize the balancing weights. Crucially, our main result allows us to conduct inference, and to test whether the instruments $Z_1$ and $Z_2$ induce differences in the observable complier characteristics suggested by previous work.

Table~\ref{tab:1} summarizes results. In Columns 1, 2, 5, and 6, we find similar point estimates to \cite{angrist2013extrapolate}, given in Row 1. Columns 3, 4, 7, and 8 report $p$ values for tests of the null hypothesis that average complier characteristics are equal for the twins and same-sex instruments. We find weak evidence in favor of the explanation that twins compliers are more
likely to have a young second-born. We do not find evidence that twins compliers have a significantly different education level than same-sex compliers.

\section{Conclusion}\label{sec:conc}

We propose a semiparametric test to evaluate (i) whether two different instruments induce subpopulations of compliers with the same observable characteristics on average, and (ii) whether compliers have observable characteristics that are the same as the full population on average. This hypothesis test is a flexible and practical robustness check for the external validity of instrumental variables. We use the test to reinterpret the difference in LATE estimates that \cite{angrist1998children} obtain when using two different instrumental variables. Specifically, we implement a machine learning update to $\kappa$ weighting that we call the automatic $\kappa$ weight (Auto-$\kappa$). To justify the test, we develop new econometric theory. Most notably, we characterize the doubly robust moment function for the entire class of complier parameters from \cite{abadie2003semiparametric}, answering an open question in the semiparametric literature in order to handle the new and economically important case of average complier characteristics.

\appendix

    \section{Automatic $\kappa$ weights}\label{sec:estimation}

\subsection{Estimation}

In Section~\ref{sec:double}, we present our main theoretical result: the doubly robust moment function for the class of complier parameters in Definition~\ref{def:class}. In this section, we propose a machine learning estimator based on this doubly robust moment function, which we call automatic $\kappa$ weighting (Auto-$\kappa$). We verify the conditions of Corollary~\ref{cor:hyp} using Auto-$\kappa$. In doing so, we provide a concrete end-to-end procedure to test whether two different instruments induce subpopulations of compliers with the same observable characteristics.

Debiased machine learning \citep{chernozhukov2016locally,chernozhukov2018original} is a meta estimation procedure that combines doubly robust moment functions \citep{robins1995semiparametric} with sample splitting \citep{klaassen1987consistent}. Given the doubly robust moment function of some causal parameter of interest as well as machine learning estimators $(\hat{\gamma},\hat{\alpha})$ for its nonparametric components, debiased machine learning generates an estimator of the causal parameter. 
\begin{algorithm}[Debiased machine learning]\label{alg_dml}
Partition the sample into subsets $(I_{\ell})$, $(\ell=1,...,L)$. 
\begin{enumerate}
    \item For each $\ell$, estimate $\hat{\gamma}_{-\ell}$ and $\hat{\alpha}_{-\ell}$ from observations not in $I_{\ell}$.
    \item Estimate $\hat{\theta}$ as the solution to 
    $
   n^{-1}\sum_{\ell=1}^L\sum_{i \in I_{\ell}} \psi(W_i,\hat{\gamma}_{-\ell},\hat{\alpha}_{-\ell},\theta)|_{\theta=\hat{\theta}}=0.
    $
\end{enumerate}
\end{algorithm}
In Theorems~\ref{thm:general_kappa} and~\ref{thm:general_kappa3}, we characterize the doubly robust moment function $\psi$ for complier parameters. What remains is an account of how to estimate the vector valued regression $\hat{\gamma}$ and the balancing weight $\hat{\alpha}$. Our theoretical results are agnostic about the choice of $(\hat{\gamma},\hat{\alpha})$ as long as they satisfy the rate conditions in Assumption~\ref{kappa_regularity} below. For example, $\hat{\gamma}$ could be a neural network. 

For the balancing weight estimator $\hat{\alpha}$, we adapt the regularized Riesz representer of \cite{chernozhukov2018learning}, though one could similarly adapt the minimax balancing weight of \cite{hirshberg2021augmented}. This aspect of the procedure departs from the explicit inversion of the propensity score in \cite{tan2006regression,belloni2017program,chernozhukov2018original}, and it improves numerical stability, which we demonstrate though comparative simulations in Supplement~\ref{sec:sim}. In particular, we project the balancing weight $\alpha_0(Z,X)$ onto the $p$ dimensional dictionary of basis functions $b(Z,X)$. A high dimensional dictionary allows for flexible approximation, which we discipline with $\ell_1$ regularization. 

\begin{algorithm}[Regularized balancing weight]\label{alg_stage1}
Based on the observations in $I_{-\ell}$,
\begin{enumerate}
    \item Calculate $p\times p$ matrix $\hat{G}_{-\ell}=(n-n_{\ell})^{-1}\sum_{i\in I_{-\ell}} b(Z_i,X_i)b(Z_i,X_i)^{\top}$,
    \item Calculate $p\times 1$ vector $\hat{M}_{-\ell}=(n-n_{\ell})^{-1}\sum_{i\in I_{-\ell}} b(1,X_i)-b(0,X_i)$,
    \item Set $\hat{\alpha}_{-\ell}(Z,X)=b(Z,X)^{\top}\hat{\rho}_{-\ell}$ where
    $
    \hat{\rho}_{-\ell}=\argmin_{\rho} \rho^{\top}\hat{G}_{-\ell}\rho-2\rho^{\top}\hat{M}_{-\ell}+2\lambda_n|\rho|_1.
    $
\end{enumerate}
\end{algorithm}

We refer to our proposed estimator, which combines the doubly robust moment function from Theorems~\ref{thm:general_kappa} and~\ref{thm:general_kappa3} with the meta procedure in Algorithm~\ref{alg_dml} and the regularized balancing weights in Algorithm~\ref{alg_stage1}, as automatic $\kappa$ weighting (Auto-$\kappa$) for complier parameters. The new doubly robust moment in Corollary~\ref{cor:characteristics} means that Auto-$\kappa$ applies to the new and economically important case of average complier characteristics.

\subsection{Affine moments}

When we verify the conditions of Corollary~\ref{cor:hyp} using Auto-$\kappa$, we focus on a sub-class of the complier parameters in Definition~\ref{def:class}. This sub-class is rich enough to include several empirically important parameters, yet simple enough to avoid iterative estimation. The sub-class consists of complier parameters with affine moments, which we now define. The affine moment condition can be relaxed, but doing so incurs iterative estimation \citep{chernozhukov2016locally}.

\begin{definition}[Affine moment]\label{def:affine}
We say a doubly robust moment function $\psi$ is affine in $\theta$ if it takes the form
$$
 \psi(W,\gamma,\alpha,\theta)=A(\theta)\{\gamma(1,X)-\gamma(0,X)\}+\alpha(Z,X)A(\theta)\{V-\gamma(Z,X)\}
$$
where $A(\theta)$ is a matrix with entries that are ones, zeros, or components of $\theta$.
\end{definition}

Next, we verify that several empirically important complier parameters have affine moments.

\begin{definition}[Empirically important complier parameters]\label{def:complier-pop} Consider the following popular parameters.
\begin{enumerate}
    \item LATE is $\theta_0=E\{Y^{(1)}-Y^{(0)}\mid  D^{(1)}>D^{(0)}\}$.
    \item Average complier characteristics are $\theta_0=E\{f(X)\mid D^{(1)}>D^{(0)}\}$ for any measurable function $f$ of covariate $X$ that may have a finite dimensional, real vector value such that $E\{f_j(X)^2\}<\infty$.
    \item Complier counterfactual outcome distributions are $\theta_0=(\theta_0^y)_{y\in\mathcal{U}}$ where $$\theta_0^y=\begin{pmatrix} \beta_0^y\\\delta_0^y\end{pmatrix}
    =\begin{bmatrix} \text{\normalfont pr}\{Y^{(0)}\leq y\mid D^{(1)}>D^{(0)}\}\\\text{\normalfont pr}\{Y^{(1)}\leq y\mid D^{(1)}>D^{(0)}\}\end{bmatrix}$$ and $\mathcal{U} \subset \mathcal{Y}$  is a fixed grid of finite dimension.
\end{enumerate}
\end{definition}

\begin{corollary}[Empirically important parameters have affine moments]\label{cor:LATE}
Under Assumption~\ref{assumption:id}, the doubly robust moment functions for LATE, average complier characteristics, and complier counterfactual outcome distributions are affine, where
\begin{enumerate}
    \item For LATE \citep{tan2006regression}, we set $V=(Y, D)^{\top}$ and $ A(\theta)=\begin{pmatrix}1, & -\theta \end{pmatrix}
$.
    \item For complier characteristics, we set $
V=(Df(X)^{\top}, D)^{\top}$ and $ A(\theta)=\begin{pmatrix}I, & -\theta \end{pmatrix}.
$
\item For complier counterfactual distributions \citep{belloni2017program}, we set 
$$
V^y=\{(D-1)1_{Y\leq y},D 1_{Y\leq y} , D\}^{\top}\text{ and } A(\theta^y)=\begin{pmatrix}1 & 0 & -\beta^y  \\ 0&1& -\delta^y \end{pmatrix}.
$$
\end{enumerate}
\end{corollary}

\begin{proof}
Suppose we can decompose $v(w,\theta)=h(w,\theta) + a(\theta)$ for some function $a(\cdot)$ that does not depend on data. Then we can replace $v(w,\theta)$ with  $h(w,\theta)$ without changing $m$ and $\phi$ in the sense of Theorem~\ref{thm:general_kappa}.  This is because $$E\{v(W,\theta) \mid z,x\} = E\{h(W,\theta)\mid z,x\} + a(\theta) $$ and hence  $$v(w,\theta)-E\{v(W,\theta) \mid z,x\}=h(w,\theta)-E\{h(W,\theta)\mid z,x\}.$$ Whenever we use this reasoning, we write $v(w,\theta)\propto h(w,\theta)$.
\begin{enumerate}
    \item For LATE we can write $\theta_0 = \delta_0 - \beta_0$, where $\delta_0$ is defined by the moment condition $E\{ Y^{(1)}-\delta_0\mid D^{(1)}>D^{(0)}\} =0$ and $\beta_0$ is defined by the moment condition $E\{ Y^{(0)}-\beta_0\mid D^{(1)}>D^{(0)}\} =0$. Applying Case 2 of Theorem~\ref{thm:general_kappa} to $\delta_0$, we have $v(w,\delta)=d (y - \delta)$. Applying Case 1 of Theorem~\ref{thm:general_kappa} to $\beta_0$, we have $v(w,\beta)= (d-1) (y - \beta)\propto (d-1) y - d \beta$. Writing $\theta=\delta-\beta$, the moment function for $\theta_0$ can thus be derived with $$v(w,\theta)=v(w,\delta)-v(w,\beta) = y - d \theta.$$ 
    This expression decomposes into $V=(Y, D)^{\top}$ and $ A(\theta)=\begin{pmatrix}1, & -\theta \end{pmatrix}$ in Corollary~\ref{cor:LATE}.
    
    \item For average complier characteristics,  $\theta_0$ is defined by the moment condition $E\{ f(X)-\theta_0\mid D^{(1)}>D^{(0)}\}=0$.  Applying Case 2 of Theorem~\ref{thm:general_kappa} setting $g(Y^{(1)},X,\theta_0) = f(X)-\theta_0$, we have $v(w,\theta)=d(f(x)-\theta)$. This expression decomposes into $V=(Df(X)^{\top}, D)^{\top}$ and $ A(\theta)=\begin{pmatrix}I, & -\theta \end{pmatrix}$  in Corollary~\ref{cor:LATE}.
    
    \item For complier distribution of $Y^{(0)}$, $\beta^{\bar{y}}_0$ is defined by the moment condition $E\{ 1_{Y^{(0)}\leq \bar{y}}-\beta^{\bar{y}}_0\mid D^{(1)}>D^{(0)}\} =0$. Applying Case 1 of Theorem~\ref{thm:general_kappa} to $\beta^{\bar{y}}_0$, we have $v(w,\beta^{\bar{y}})= (d-1) ( 1_{y \leq {\bar{y}}} - \beta^{\bar{y}}) \propto (d-1) 1_{y \leq {\bar{y}}} - d \beta^{\bar{y}}$. For complier distribution of $Y^{(1)}$, $\delta^{\bar{y}}_0$ is defined by the moment condition $E\{ 1_{Y^{(1)}\leq {\bar{y}}}-\delta^{\bar{y}}_0\mid D^{(1)}>D^{(0)}\} =0$. Applying Case 2 of Theorem~\ref{thm:general_kappa} to $\delta_0$, we have $v(w,\delta^{\bar{y}})= d ( 1_{y \leq {\bar{y}}} - \delta^{\bar{y}})$.  Concatenating $v(w,\beta^{\bar{y}})$ and $v(w,\delta^{\bar{y}})$, we arrive at the decomposition in  Corollary~\ref{cor:LATE}.
\end{enumerate}

\end{proof}

\subsection{Inference}

We prove the Auto-$\kappa$ estimator for complier parameters is consistent, asymptotically normal, and semiparametrically efficient. In doing so, we verify the conditions of Corollary~\ref{cor:hyp}. We build on the theoretical foundations in \cite{chernozhukov2016locally} to generalize the main result in \cite{chernozhukov2018learning}. We assume the following regularity conditions.

\begin{assumption}[Regularity conditions for complier parameter estimation]\label{kappa_regularity}
Assume
\begin{enumerate}
\item Affine moment: $\psi$ is affine in $\theta$;
    \item Bounded propensity: $\pi_0(X)$ is in $(\bar{c},1-\bar{c})$ for some $\bar{c}>0$ uniformly over the support of $X$;
    \item Bounded variance: $\text{\normalfont var}(V\mid Z,X)$ is bounded uniformly over the support of $(Z,X)$;
    \item Nonsingular Jacobian: $J=E\left\{\partial \psi(W,\gamma_0,\alpha_0,\theta) / \partial\theta  |_{\theta=\theta_0}\right\}$ is nonsingular;
    \item Compact parameter space: $\theta_0,\hat{\theta}$ are in $\Theta$, a compact parameter space;
    \item Rates: $|\hat{\alpha}|_{\infty}=O_p(1)$, $\|\hat{\alpha}-\alpha_0\|=o_p(1)$, $\|\hat{\gamma}-\gamma_0\|=o_p(1)$, and $\|\hat{\alpha}-\alpha_0\| \|\hat{\gamma}-\gamma_0\|=o_p(n^{-1/2})$.
\end{enumerate}
\end{assumption}

The most substantial condition in Assumption~\ref{kappa_regularity} is the rate condition, where we use the notation $\|V_j\|=\{E(V_j^2)\}^{1/2}$ and $\|V\|=\{\|V_1\|,...,\|V_{dim(V)}\|\}^{\top}$. In Supplement~\ref{sec:rate}, we verify the rate condition for the $\hat{\alpha}$ estimator in Algorithm~\ref{alg_stage1}. Since $\hat{\gamma}$ is a standard nonparametric regression, a broad variety of estimators and their mean square rates can be quoted to satisfy the rate condition for $\hat{\gamma}$. The product condition formalizes the mixed bias property. It allows \textit{either} the convergence rate of $\hat{\gamma}$ to be slower than $n^{-1/4}$ \textit{or} the convergence rate of $\hat{\alpha}$ to be slower than $n^{-1/4}$, as long as the other convergence rate is faster than $n^{-1/4}$. As such, it allows \textit{either} $\hat{\gamma}$ to be a complicated function \textit{or} $\hat{\alpha}$ to be a complicated function, as long as the other is a simple function, in a sense that we formalize in Supplement~\ref{sec:rate}.

\begin{theorem}[Consistency and asymptotic normality]\label{asymptotic_normality}
Suppose Assumption~\ref{kappa_regularity} holds. Then $\hat{\theta}=\theta_0+o_p(1)$,  $n^{1/2}(\hat{\theta}-\theta_0)\rightsquigarrow \mathcal{N}(0,C)$, and $\hat{C}=C+o_p(1)$ where
\begin{align*}
    &J=E\left\{\frac{\partial \psi_0(W)}{\partial\theta}\right\},\;
    \hat{J}=\frac{1}{n}\sum_{\ell=1}^L\sum_{i \in I_{\ell}}\frac{\partial \hat{\psi}_i(\hat{\theta})}{\partial \theta},\;
    \Omega=E\{\psi_0(W)\psi_0(W)^{\top}\},\;
    \hat{\Omega}=\frac{1}{n}\sum_{\ell=1}^L\sum_{i \in I_{\ell}} \hat{\psi}_i(\hat{\theta})\hat{\psi}_i(\hat{\theta})^{\top} \\
    &C=J^{-1}\Omega J^{-1},\quad
    \hat{C}=\hat{J}^{-1}\hat{\Omega} \hat{J}^{-1},\quad 
    \psi_0(W)=\psi(W,\gamma_0,\alpha_0,\theta_0),\quad
    \hat{\psi}_i(\theta)=\psi(W_i,\hat{\gamma}_{-\ell},\hat{\alpha}_{-\ell},\theta).
\end{align*}
\end{theorem}

\begin{proof}
We defer the proof to Supplement~\ref{sec:proof2}.
\end{proof}
    
    \bibliography{bib} 

\section*{Supplementary material}
Supplementary material includes proofs, rate conditions, simulations, implementation details, and code.

\section{Rate conditions}\label{sec:rate}

In this section, we present assumptions to guarantee that the estimators $(\hat{\gamma},\hat{\alpha})$ of the nonparametric functions $(\gamma_0,\alpha_0)$ satisfy the rate conditions in Assumption~\ref{kappa_regularity}. First, we place a weak assumption on the dictionary of basis functions $b$.
\begin{assumption}[Bounded dictionary]\label{B_b}
The dictionary is bounded. Formally, there exists some $C>0$ such that
$\max_j |b_j(Z,X)|\leq C$ almost surely.
\end{assumption}

Next, we articulate assumptions required for convergence of $\hat{\alpha}$ under two regimes: the regime in which $\alpha_0$ is dense and the regime in which $\alpha_0$ is sparse.
\begin{assumption}[Dense balancing weight]\label{dense_RR}
The balancing weight $\alpha_0$ is well approximated by the full dictionary $b$. Formally, assume there exist some $\rho_n\in\mathbb{R}^p$ and $C<\infty$ such that $|\rho_n|_1\leq C$ and $\|\alpha_0-b^{\top}\rho_n\|^2=O\{(\log p / n)^{1/2}\}$.
\end{assumption}
Assumption~\ref{dense_RR} is satisfied if, for example, $\alpha_0$ is a linear combination of $b$. 

\begin{assumption}[Sparse balancing weight]\label{sparse_RR}
The balancing weight $\alpha_0$ is well approximated by a sparse subset of the dictionary $b$. Formally, assume
\begin{enumerate}
    \item There exist $C>1$ and $\xi>0$ such that for all $ \bar{s}\leq C\left(\log p / n\right)^{-1/(1+2\xi)}$, there exists some $\bar{\rho}\in\mathbb{R}^{p}$ with $|\bar{\rho}|_1\leq C$ and $\bar{s}$ nonzero elements such that $\|\alpha_0-b^{\top}\bar{\rho}\|^2\leq C\bar{s}^{-\xi}$.
    \item $G=E\{b(Z,X)b(Z,X)^{\top}\}$ has largest eigenvalue uniformly bounded in $n$.
    \item Denote $\mathcal{J}_{\rho}=support(\rho)$. There exists $k>3$ such that for $\rho=\rho_L,\bar{\rho}$ 
$$
\textsc{re}(k)=\inf_{\delta\in \Delta(\mathcal{J}_{\rho})}\frac{\delta^{\top}G\delta}{\sum_{j\in \mathcal{J}_{\rho}}\delta_j^2} >0,\quad \Delta(\mathcal{J}_{\rho})=\left(\delta\in\mathbb{R}^p:\delta\neq0,\sum_{j\in \mathcal{J}^c_{\rho}}|\delta_j|\leq k \sum_{j\in \mathcal{J}_{\rho}}|\delta_j|\right).
$$ 
\label{eigen2}
\item $\log p=O(\log n)$.
\end{enumerate}
\end{assumption}
Assumption~\ref{sparse_RR} is satisfied if, for example, $\alpha_0$ is sparse or approximately sparse \citep{chernozhukov2018learning}. The uniform bound on the largest eigenvalue of $G$ rules out the possibility that $G$ is an equal correlation matrix. \textsc{re} is the population version of the restricted eigenvalue condition \citep{bickel2009simultaneous}. It generalizes the familiar notion of no multicollinearity to the high dimensional setting. The final condition $\log p=O(\log n)$ rules out the possibility that $p=\exp(n)$; dimension cannot grow too much faster than sample size.

We adapt convergence guarantees from \cite{chernozhukov2018learning} for the balancing weight estimator $\hat{\alpha}$ in Algorithm~\ref{alg_stage1}. We obtain a slow rate for dense $\alpha_0$ and a fast rate for sparse $\alpha_0$. In both cases, we require the data driven regularization parameter $\lambda_n$ to approach $0$ slightly slower than $(\log p/n)^{1/2}$.
\begin{assumption}[Regularization]\label{regularization}
$\lambda_n=a_n(\log p / n)^{1/2}$ for some $a_n\rightarrow \infty$.
\end{assumption}
For example, one could set $a_n=\log\{\log(n)\}$ \citep{chatterjee2015prediction}. In Supplement~\ref{sec:tuning}, we provide and justify an iterative tuning procedure to determine data driven regularization parameter $\lambda_n$. The guarantees are as follows.

\begin{lemma}[Dense balancing weight rate]\label{dense_rate}
Under Assumptions~\ref{assumption:id},~\ref{B_b},~\ref{dense_RR}, and~\ref{regularization}, $$\|\hat{\alpha}-\alpha_0\|^2=O_p\left\{a_n\left(\frac{\log p}{n}\right)^{1/2}\right\},\quad |\hat{\rho}|_1=O_p(1).$$
\end{lemma}
\begin{lemma}[Sparse balancing weight rate]\label{sparse_rate}
Under Assumptions~\ref{assumption:id},~\ref{B_b},~\ref{sparse_RR}, and~\ref{regularization}, $$\|\hat{\alpha}-\alpha_0\|^2=O_p\left\{a_n^2\left(\frac{\log p}{n}\right)^{2\xi/(1+2\xi)}\right\},\quad |\hat{\rho}|_1=O_p(1).$$
\end{lemma}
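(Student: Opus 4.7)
The plan is to adapt the Lasso oracle-inequality recipe from \cite{chernozhukov2018learning} to our minimum-distance estimator. The crucial reformulation is the Riesz representer identity $\mathbf{M}=\mathbb{E}[b(1,\mathbf{X})-b(0,\mathbf{X})]=\mathbb{E}[\alpha_0(Z,\mathbf{X})b(Z,\mathbf{X})]$, which converts the population objective into a prediction loss, $\pmb{\rho}'\mathbf{G}\pmb{\rho}-2\pmb{\rho}'\mathbf{M}=\|\alpha_0-b'\pmb{\rho}\|^2-\|\alpha_0\|^2$. Thus Algorithm~\ref{alg_stage1} is effectively an $\ell_1$-regularized least-squares fit to the oracle $\alpha_0$ under approximate sparsity, and the conventional Lasso toolkit applies with only minor adjustments.

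I first control the stochastic error between the sample gradient and Hessian and their population counterparts. Since Assumption~\ref{B_b} bounds every coordinate of $b$, Hoeffding's inequality plus union bounds over the $p$ entries of $\hat{\mathbf{M}}$ and the $p^2$ entries of $\hat{\mathbf{G}}$ yield $|\hat{\mathbf{M}}-\mathbf{M}|_\infty\vee |\hat{\mathbf{G}}-\mathbf{G}|_\infty =O_p(\sqrt{\ln p/n})=o_p(\lambda_n)$ by Assumption~\ref{regularization}; call this event $\mathcal{E}_n$. On $\mathcal{E}_n$, the optimality of $\hat{\pmb{\rho}}$ against the oracle sparse $\bar{\pmb{\rho}}$ promised by Assumption~\ref{sparse_RR} gives the basic inequality
\[
\pmb{\delta}'\hat{\mathbf{G}}\pmb{\delta}+2\lambda_n|\hat{\pmb{\rho}}|_1\leq 2\lambda_n|\bar{\pmb{\rho}}|_1-2\pmb{\delta}'(\hat{\mathbf{G}}\bar{\pmb{\rho}}-\hat{\mathbf{M}}),
\]
where $\pmb{\delta}=\hat{\pmb{\rho}}-\bar{\pmb{\rho}}$. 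I decompose $\hat{\mathbf{G}}\bar{\pmb{\rho}}-\hat{\mathbf{M}}=(\hat{\mathbf{G}}-\mathbf{G})\bar{\pmb{\rho}}+(\mathbf{G}\bar{\pmb{\rho}}-\mathbf{M})+(\mathbf{M}-\hat{\mathbf{M}})$. The first and third pieces are dominated by $\lambda_n|\pmb{\delta}|_1$ on $\mathcal{E}_n$ since $|\bar{\pmb{\rho}}|_1\leq C$; the middle piece, whose sup norm is at most $C\|b'\bar{\pmb{\rho}}-\alpha_0\|\leq C\sqrt{C}\bar{s}^{-\xi/2}$ by Cauchy--Schwarz with the bounded dictionary, can alternatively be written as $\langle b'\pmb{\delta},b'\bar{\pmb{\rho}}-\alpha_0\rangle$ and partially folded into the left-hand side via AM--GM. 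Standard triangle-inequality manipulations on $|\hat{\pmb{\rho}}|_1-|\bar{\pmb{\rho}}|_1$ then place $\pmb{\delta}$ inside the cone $\sum_{j\in \mathcal{J}^c}|\delta_j|\leq k\sum_{j\in \mathcal{J}}|\delta_j|$ for some $k>3$ compatible with Assumption~\ref{sparse_RR}.3.

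To convert the restricted eigenvalue condition, stated for $\mathbf{G}$, into a lower bound on $\pmb{\delta}'\hat{\mathbf{G}}\pmb{\delta}$, I transfer from the empirical Gram to the population one using $|\pmb{\delta}'(\hat{\mathbf{G}}-\mathbf{G})\pmb{\delta}|\leq |\hat{\mathbf{G}}-\mathbf{G}|_\infty|\pmb{\delta}|_1^{2}$ together with the cone identity $|\pmb{\delta}|_1\leq (1+k)\sqrt{\bar{s}/RE(k)}\,\|b'\pmb{\delta}\|$. The sparsity ceiling $\bar{s}\leq C(\ln p/n)^{-1/(1+2\xi)}$ in Assumption~\ref{sparse_RR} is calibrated precisely so the product $|\hat{\mathbf{G}}-\mathbf{G}|_\infty \bar{s}$ vanishes, ensuring $\pmb{\delta}'\hat{\mathbf{G}}\pmb{\delta}\geq \tfrac12\pmb{\delta}'\mathbf{G}\pmb{\delta}=\tfrac12\|b'\pmb{\delta}\|^2$ for large $n$. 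Substituting back and solving the resulting quadratic in $\|b'\pmb{\delta}\|$ yields $\|b'\pmb{\delta}\|^2=O_p(\bar{s}\lambda_n^2+\bar{s}^{-\xi})$ and, via the cone bound, $|\pmb{\delta}|_1=O_p(1)$.

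Finally, the triangle inequality $\|\hat{\alpha}-\alpha_0\|^2\leq 2\|b'\pmb{\delta}\|^2+2\|b'\bar{\pmb{\rho}}-\alpha_0\|^2=O_p(\bar{s}\lambda_n^2+\bar{s}^{-\xi})$, evaluated at $\bar{s}$ at the boundary permitted by Assumption~\ref{sparse_RR}, yields the stated rate $O_p(a_n^2(\ln p/n)^{2\xi/(1+2\xi)})$. The $\ell_1$ claim follows from $|\hat{\pmb{\rho}}|_1\leq |\bar{\pmb{\rho}}|_1+|\pmb{\delta}|_1\leq C+O_p(1)$. The hard part is the random-to-population passage in the quadratic form: because $RE$ is a population condition, any $\bar{s}$ exceeding the ceiling would make $|\hat{\mathbf{G}}-\mathbf{G}|_\infty \bar{s}$ fail to vanish, so the interplay between this constraint and the approximation-sparsity trade-off $\bar{s}\lambda_n^2$ versus $\bar{s}^{-\xi}$ is exactly what selects the nonparametric exponent $2\xi/(1+2\xi)$, in contrast to the dense-regime exponent of Lemma~\ref{dense_rate}.
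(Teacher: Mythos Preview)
Your approach is essentially the paper's: its entire proof is to verify the sup-norm concentration $|\hat{\mathbf{G}}-\mathbf{G}|_\infty,\,|\hat{\mathbf{M}}-\mathbf{M}|_\infty=O_p(\sqrt{\ln p/n})$ and then cite Theorem~3 of \cite{chernozhukov2018learning}, which is precisely the Lasso oracle-inequality machinery you have unpacked (with the $|\hat{\pmb{\rho}}|_1$ bound likewise delegated to Lemmas~2--3 of that reference). So strategy and ingredients coincide; you have simply written out what the citation contains.

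One concrete slip in your sketch is worth flagging. You assert that the sparsity ceiling in Assumption~\ref{sparse_RR} is ``calibrated precisely so the product $|\hat{\mathbf{G}}-\mathbf{G}|_\infty\bar{s}$ vanishes,'' but
\[
|\hat{\mathbf{G}}-\mathbf{G}|_\infty\,\bar{s}=O_p\!\left((\ln p/n)^{1/2}\cdot(\ln p/n)^{-1/(1+2\xi)}\right)=O_p\!\left((\ln p/n)^{(2\xi-1)/(2(1+2\xi))}\right),
\]
which tends to zero only when $\xi>1/2$. For $\xi\leq 1/2$ your $\ell_1$--$\ell_\infty$ transfer of the population restricted-eigenvalue bound to $\hat{\mathbf{G}}$ fails as written, so the step $\pmb{\delta}'\hat{\mathbf{G}}\pmb{\delta}\geq\tfrac12\pmb{\delta}'\mathbf{G}\pmb{\delta}$ is not justified. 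The cited result handles this with a sharper empirical-process transfer of $RE$ (and in any case the paper remarks that the sparse rate only beats the dense-regime rate of Lemma~\ref{dense_rate} when $\xi>1/2$, so the problematic range is not the operative one); but as stated your argument covers only $\xi>1/2$.
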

See Supplement~\ref{sec:proof2} for the proofs. Whereas Lemma~\ref{dense_rate} does not require an explicit sparsity condition, Lemma~\ref{sparse_rate} does. When $\xi>1/2$, the rate in Lemma~\ref{sparse_rate} is faster than the rate in Lemma~\ref{dense_rate} for $a_n$ growing slowly enough. Interpreting the rate in Lemma~\ref{sparse_rate}, $n^{-2\xi/(1+2\xi)}$ is the well known rate of convergence if the identity of the nonzero components of $\bar{\rho}$ were known. The fact that their identity is unknown introduces a cost of $(\log p)^{2\xi/(1+2\xi)}$. The cost $a_n^2$ can be made arbitrarily small.

We place a rate assumption on the machine learning estimator $\hat{\gamma}$. It is a weak condition that allows $\hat{\gamma}$ to converge at a rate slower than $n^{-1/2}$. Importantly, it allows the analyst a broad variety of choices of machine learning estimators such as neural network or lasso. \cite{schmidt2020nonparametric,farrell2021deep} provide a rate for the former, while Lemmas~\ref{dense_rate} and~\ref{sparse_rate} provide rates for the latter, using the functional $b\mapsto E\{b(Z,X)V^{\top}\}$ instead.
\begin{assumption}[Regression rate]\label{gamma_hat}
$\|\hat{\gamma}-\gamma_0\|=O_p(n^{-d_{\gamma}})$ where
\begin{enumerate}
    \item In the dense balancing weight regime, $1/4\leq d_{\gamma} \leq 1/2$;
    \item In the sparse balancing weight regime, $1/2-\xi/(1+2\xi) \leq d_{\gamma} \leq 1/2$.
\end{enumerate}
\end{assumption}
These regime specific lower bounds on $d_{\gamma}$ are sufficient conditions for the product rate condition.

\begin{corollary}[Verifying rate condition]\label{product}
Suppose the conditions of Lemma~\ref{dense_rate} or Lemma~\ref{sparse_rate} hold as well as Assumption~\ref{gamma_hat}. Then the rate conditions of Assumption~\ref{kappa_regularity} hold: $|\hat{\alpha}|_{\infty}=O_p(1)$, $\|\hat{\alpha}-\alpha_0\|=o_p(1)$, $\|\hat{\gamma}-\gamma_0\|=o_p(1)$, and $\|\hat{\alpha}-\alpha_0\| \|\hat{\gamma}-\gamma_0\|=o_p(n^{-1/2})$.
\end{corollary}
The product rate condition in Corollary~\ref{product} formalizes the trade off in estimation error permitted in estimating $(\gamma_0,\alpha_0)$. In particular, faster convergence of $\hat{\alpha}$ permits slower convergence of $\hat{\gamma}$. Prior information about the balancing weight $\alpha_0$ used to estimate $\hat{\alpha}$, encoded by sparsity or perhaps by additional moment restrictions, can be helpful in this way. We will appeal to this product condition while proving statistical guarantees for complier parameters. 


\section{Proof of consistency and asymptotic normality for Auto-$\kappa$}\label{sec:proof2}

\subsection{Lemmas from previous work}

In this section, we prove consistency and asymptotic normality. For simplicity, we focus on the affine complier parameters of Definition~\ref{def:affine}. Corollary~\ref{cor:LATE} shows that this class that includes several popular complier parameters, including the leading case of average complier characteristics. The inference arguments can be generalized to the entire class in Definition~\ref{def:class}, including moments that are nonlinear in $\theta$, by introducing heavier notation and additional sample splitting for the nonlinear cases; see \cite{chernozhukov2016locally} for details.

We present the results in two subsections. In this subsection, we quote lemmas from previous work. In the next subsection, we present original arguments to prove consistency and asymptotic normality for our instrumental variable setting.

Consider the notation
\begin{align*}
    \psi(w,\gamma,\alpha,\theta)&=m(w,\gamma,\theta)+\phi(w,\gamma,\alpha,\theta);\\
    m(w,\gamma,\theta)&=A(\theta)\tilde{m}(w,\gamma); \\
    \tilde{m}(w,\gamma)&=\gamma(1,x)-\gamma(0,x); \\
    \phi(w,\gamma,\alpha,\theta)&=\alpha(z,x)A(\theta)\{v-\gamma(z,x)\}.
\end{align*}

\begin{definition} Define the following matrix $G\in\mathbb{R}^{p\times p}$ and the vector $M \in\mathbb{R}^p$:
\begin{align*}
    G&=E\{b(Z,X)b(Z,X)^{\top}\}, \\
    M&=E\{m(W,b,\theta_0)\}.
\end{align*}
\end{definition}

\begin{proposition}[Lemma C1 of \cite{chernozhukov2018learning}]\label{G_hat}
Under Assumption~\ref{B_b}, we have  $|\hat{G}-G|_{\infty}=O_p\{\left(\log p/n\right)^{1/2}\}$.
\end{proposition}

\begin{proposition}[Lemma 4 of \cite{chernozhukov2018learning}]\label{M_hat}
Under Assumptions~\ref{assumption:id} and~\ref{B_b}, we have $|\hat{M}-M|_{\infty}=O_p\{\left(\log p/n\right)^{1/2}\}$.
\end{proposition}

\begin{proof}[Proof of Lemma~\ref{dense_rate}]
Applying Proposition~\ref{G_hat} and Proposition~\ref{M_hat}, the proof follows \citet[Theorem 1]{chernozhukov2018learning}.
\end{proof}

\begin{proof}[Proof of Lemma~\ref{sparse_rate}]
Applying  Proposition~\ref{G_hat} and Proposition~\ref{M_hat}, the proof follows \citet[Theorem 3]{chernozhukov2018learning}. The argument that $|\hat{\rho}|_1=O_p(1)$ is analogous to \citet[Lemmas 2 and 3]{chernozhukov2018learning}.
\end{proof}

\begin{lemma}[Theorem 6 of \cite{chernozhukov2018learning}]\label{regularity}
Under Assumptions~\ref{assumption:id} and~\ref{kappa_regularity}, the following results hold.
\begin{enumerate}
    \item $E\{\tilde{m}(W,\gamma_0)^2\}<\infty$,
    \item $E[\{\tilde{m}(W,\gamma)-\tilde{m}(W,\gamma_0)\}^2] \leq C \|\gamma-\gamma_0\|^2$,
    \item $\max_j|\tilde{m}(W,b_j)-\tilde{m}(W,0)|\leq C$. 
\end{enumerate}
\end{lemma}

\begin{lemma}\label{asymptotic_linearity}
Suppose the conditions of Theorem~\ref{asymptotic_normality} hold. Then we have $$n^{-1/2}\sum_{\ell=1}^L\sum_{i\in I_{\ell}} \psi(W_i,\hat{\gamma}_{-\ell},\hat{\alpha}_{-\ell},\hat{\theta})=n^{-1/2} \sum_{i=1}^n \psi_0(W_i)+o_p(1),\quad \psi_0(W_i)=\psi(W_i,\gamma_0,\alpha_0,\theta_0).$$
\end{lemma}

\begin{proof}
The proof follows from \citet[Theorem 5]{chernozhukov2018learning}, appealing to Corollary~\ref{product} and Lemma~\ref{regularity}.
\end{proof}

\begin{lemma}[Theorem 2.1 \cite{newey1994large}]\label{extremum}
Consider $\hat{\theta}$ defined as $\argmin_{\theta\in\Theta} \hat{Q}(\theta)$, where $\hat{Q}:\Theta\rightarrow\mathbb{R}$ estimates $Q_0:\Theta\rightarrow\mathbb{R}$. If
\begin{enumerate}
    \item $\Theta$ is compact,
   \item $Q_0$ is continuous in $\theta$ over $\Theta$,
   \item $Q_0$ is uniquely maximized at $\theta_0$,
   \item $\sup_{\theta\in\Theta}|\hat{Q}(\theta)-Q_0(\theta)|=o_p(1)$,
\end{enumerate}
then $\hat{\theta}=\theta_0+o_p(1)$.
\end{lemma}

\subsection{Consistency and asymptotic normality}\label{sec:stage2}

\begin{proposition}\label{local4}
Suppose the conditions of Theorem~\ref{asymptotic_normality} hold. Then for each fold $I_{\ell}$ the following holds:
\begin{enumerate}
    \item $E[\{m(W,\hat{\gamma}_{-\ell},\theta_0)-m(W,\gamma_0,\theta_0)\}^2\mid  I_{-\ell}]=o_p(1)$,
    \item $E[\{\phi(W,\hat{\gamma}_{-\ell},\alpha_0,\theta_0)-\phi(W,\gamma_0,\alpha_0,\theta_0)\}^2\mid  I_{-\ell}]=o_p(1)$,
    \item $E[\{\phi(W,\gamma_0,\hat{\alpha}_{-\ell},\theta_0)-\phi(W,\gamma_0,\alpha_0,\theta_0)\}^2\mid  I_{-\ell}]=o_p(1)$.
\end{enumerate}
The notation $E(\cdot\mid I_{-\ell})$ means conditional on $W_{-\ell}=(W_i)_{i\notin I_{\ell}}$, i.e. observations not in fold $I_{\ell}$.
\end{proposition}

\begin{proof}
First observe that
\begin{align*}
    \phi(W,\hat{\gamma}_{-\ell},\alpha_0,\theta_0)-\phi(W,\gamma_0,\alpha_0,\theta_0)&=\alpha_0(z,x)A(\theta_0)\{\gamma_0(z,x)-\hat{\gamma}_{-\ell}(z,x)\}, \\
    \phi(W,\gamma_0,\hat{\alpha}_{-\ell},\theta_0)-\phi(W,\gamma_0,\alpha_0,\theta_0)&=\{\hat{\alpha}_{-\ell}(z,x)-\alpha_0(z,x)\}A(\theta_0)\{v-\gamma_0(z,x)\}.
\end{align*}
To lighten the proof, we slightly abuse notation as follows:
\begin{align*}
    \|\gamma_0-\hat{\gamma}_{-\ell}\|^2&=E[\{\gamma_0(Z,X)-\hat{\gamma}_{-\ell}(Z,X)\}^2\mid I_{\ell}];\\
    \|\alpha_0-\hat{\alpha}_{-\ell}\|^2&=E[\{\alpha(Z,X)-\hat{\alpha}_{-\ell}(Z,X)\}^2\mid I_{\ell}].
\end{align*}

\begin{enumerate}
    \item By Lemma~\ref{regularity}, the convergence holds due to  
    $\|\gamma_0-\hat{\gamma}_{-\ell}\|=o_p(1)$.
   
    \item By Assumption~\ref{gamma_hat} and Assumption~\ref{kappa_regularity}, we have
    $$
    \|\alpha_0 A(\theta_0)(\gamma_0-\hat{\gamma}_{-\ell})\|\leq C A(\theta_0)\|\gamma_0-\hat{\gamma}_{-\ell}\|=o_p(1).
    $$
    \item By Lemma~\ref{dense_rate} or Lemma~\ref{sparse_rate}, Assumption~\ref{kappa_regularity}, and law of iterated expectations with respect to $I_{-\ell}$, we have
    $$
    \| (\hat{\alpha}_{-\ell}-\alpha_0)A(\theta_0)\{v-\gamma_0(z,x)\}\|\leq \| \hat{\alpha}_{-\ell}-\alpha_0\|A(\theta_0) C \vec{1}=o_p(1)
    $$
    where $\vec{1}$ is the vector of ones.
\end{enumerate}
\end{proof}

\begin{proposition}\label{local5}
Suppose the conditions of Theorem~\ref{asymptotic_normality} hold. Then
\begin{align*}
    &n^{-1/2}\sum_{\ell=1}^L\sum_{i\in I_{\ell}}\{
    \phi(W_i,\hat{\gamma}_{-\ell},\hat{\alpha}_{-\ell},\theta_0)
    -\phi(W_i,\hat{\gamma}_{-\ell},\alpha_0,\theta_0)
    \\
    &\quad-\phi(W_i,\gamma_0,\hat{\alpha}_{-\ell},\theta_0)
    +\phi(W_i,\gamma_0,\alpha_0,\theta_0)\}=o_p(1).
\end{align*}
\end{proposition}

\begin{proof}
To begin, write
\begin{align*}
 &\phi(w,\hat{\gamma}_{-\ell},\hat{\alpha}_{-\ell},\theta_0)
    -\phi(w,\hat{\gamma}_{-\ell},\alpha_0,\theta_0)
    -\phi(w,\gamma_0,\hat{\alpha}_{-\ell},\theta_0)
    +\phi(w,\gamma_0,\alpha_0,\theta_0)\\
    &=-\{\hat{\alpha}_{-\ell}(z,x)-\alpha_0(z,x)\}A(\theta_0)\{\hat{\gamma}_{-\ell}(z,x)-\gamma_0(z,x)\}.
\end{align*}
Because convergence in first mean implies convergence in probability, it suffices to analyze
    \begin{align*}
        &E\left[\left\vert n^{-1/2} \sum_{\ell=1}^L\sum_{i\in I_{\ell}} -\{\hat{\alpha}_{-\ell}(Z_i,X_i)-\alpha_0(Z_i,X_i)\}A(\theta_0)\{\hat{\gamma}_{-\ell}(Z_i,X_i)-\gamma_0(Z_i,X_i)\} \right\vert \right]\\
        &\leq\sum_{\ell=1}^L E\left[n^{1/2}\frac{1}{n} \sum_{i\in I_{\ell}} \left\vert -\{\hat{\alpha}_{-\ell}(Z_i,X_i)-\alpha_0(Z_i,X_i)\}A(\theta_0)\{\hat{\gamma}_{-\ell}(Z_i,X_i)-\gamma_0(Z_i,X_i)\} \right\vert\right]\\
        &= \sum_{\ell=1}^L E\left(E\left[n^{1/2}\frac{1}{n} \sum_{i\in I_{\ell}} \left\vert\{\hat{\alpha}_{-\ell}(Z_i,X_i)-\alpha_0(Z_i,X_i)\}A(\theta_0)\{\hat{\gamma}_{-\ell}(Z_i,X_i)-\gamma_0(Z_i,X_i)\}\right\vert\mid I_{-\ell}\right]\right) \\
        &=\sum_{\ell=1}^L E\left(E\left[\left\vert n^{1/2}\frac{n_{\ell}}{n}\{\hat{\alpha}_{-\ell}(Z_i,X_i)-\alpha_0(Z_i,X_i)\}A(\theta_0)\{\hat{\gamma}_{-\ell}(Z_i,X_i)-\gamma_0(Z_i,X_i)\}\right\vert \mid I_{-\ell}\right]\right). 
    \end{align*}
    Applying H\"older's inequality elementwise  and Corollary~\ref{product}, we have convergence for each summand as follows:
    \begin{align*}
        &E\left[|n^{1/2}\frac{n_{\ell}}{n}\{\hat{\alpha}_{-\ell}(Z_i,X_i)-\alpha_0(Z_i,X_i)\}A(\theta_0)\{\hat{\gamma}_{-\ell}(Z_i,X_i)-\gamma_0(Z_i,X_i)\}| \mid I_{-\ell}\right]  \\
        &\leq E \left[|n^{1/2}\{\hat{\alpha}_{-\ell}(Z_i,X_i)-\alpha_0(Z_i,X_i)\}A(\theta_0)\{\hat{\gamma}_{-\ell}(Z_i,X_i)-\gamma_0(Z_i,X_i)\}|\mid I_{-\ell}\right] \\
        &\leq n^{1/2}\|\hat{\alpha}_{-\ell}-\alpha_0\| A(\theta_0) \|\hat{\gamma}_{-\ell}-\gamma_0\| \\
        &=o_p(1).
    \end{align*}
    In the penultimate step, we slightly abuse notation, using
\begin{align*}
    \|\gamma_0-\hat{\gamma}_{-\ell}\|^2&=E[\{\gamma_0(Z,X)-\hat{\gamma}_{-\ell}(Z,X)\}^2\mid I_{\ell}];\\
    \|\alpha_0-\hat{\alpha}_{-\ell}\|^2&=E[\{\alpha(Z,X)-\hat{\alpha}_{-\ell}(Z,X)\}^2\mid I_{\ell}].
\end{align*}
\end{proof}

\begin{proposition}\label{local6}
Under Assumption~\ref{assumption:id}, for each fold $I_{\ell}$, the following holds:
\begin{enumerate}
    \item $n^{1/2}E\{\psi(W,\hat{\gamma}_{-\ell},\alpha_0,\theta_0)\}=o_p(1);$
    \item $n^{1/2}E\{\phi(W,\gamma_0,\hat{\alpha}_{-\ell},\theta_0)\}=o_p(1).$
\end{enumerate}
\end{proposition}

\begin{proof}
To begin, write
\begin{align*}
    E\{\psi(W,\hat{\gamma}_{-\ell},\alpha_0,\theta_0)\}&=E[A(\theta_0)\{\hat{\gamma}_{-\ell}(1,X)-\hat{\gamma}_{-\ell}(0,X)\}+\alpha_0(Z,X)A(\theta_0)\{V-\hat{\gamma}_{-\ell}(Z,X)\}]; \\
    E\{\phi(W,\gamma_0,\hat{\alpha}_{-\ell},\theta_0)\}&=E[\hat{\alpha}_{-\ell}(Z,X)A(\theta_0)\{V-\gamma_0(Z,X)\}].
\end{align*}
\begin{enumerate}
    \item By Proposition~\ref{prop:rr}, $E\left\{\psi(W,\hat{\gamma}_{-\ell},\alpha_0,\theta_0)\mid I_{-\ell}\right\} =0$.  Applying the law of iterated expectations, we have $E\{\psi(W,\hat{\gamma}_{-\ell},\alpha_0,\theta_0)\}=0$.
    \item By law of iterated expectations, $E\left\{\phi(W,\gamma_0,\hat{\alpha}_{-\ell},\theta_0)\mid I_{-\ell}\right\} =0$.  Applying the law of iterated expectations, we have $E\{\psi(W,\hat{\gamma}_{-\ell},\alpha_0,\theta_0)\}=0$.
\end{enumerate}
\end{proof}

\begin{proposition}\label{local7}
Suppose the conditions of Theorem~\ref{asymptotic_normality} hold. Then
\begin{enumerate}
    \item The Jacobian $J$ exists.
    \item There exists a neighborhood $\mathcal{N}$ of $\theta_0$ with respect to $|\cdot|_2$ such that
\begin{enumerate}
    \item $\|\hat{\gamma}_{-\ell}-\gamma_0 \|=o_p(1)$;
        \item $\|\hat{\alpha}_{-\ell}-\alpha_0\|=o_p(1)$;
    \item For $\|\gamma-\gamma_0\|$ and $\|\alpha-\alpha_0\|$ small enough, $\psi(W_i,\gamma,\alpha,\theta)$ is differentiable in $\theta$ with probability approaching one;
    \item There exists $\zeta>0$ and $d(W)$ such that $E\{d(W)\}<\infty$ and for  $\|\gamma-\gamma_0\|$ small enough, 
    $$\left|\frac{\partial \psi(w,\gamma,\alpha,\theta)}{\partial \theta}-\frac{\partial \psi( w,\gamma,\alpha,\theta_0) }{\partial\theta}\right|_2\leq d(w)|\theta-\theta_0|_2^{\zeta}.$$
\end{enumerate}
\item For any fold $I_{\ell}$ and any components $(j,k)$ ,
$$E\left\{\left|\frac{\partial \psi_j(W,\hat{\gamma}_{-\ell},\hat{\alpha}_{-\ell},\theta_0)}{\partial \theta_k}-\frac{\partial \psi_j( W,\gamma_0,\alpha_0,\theta_0) }{\partial\theta_k}\right|\right\}=o_p(1).$$
\end{enumerate}
\end{proposition}

\begin{proof}
To begin, write
$$
\frac{\partial \psi(w,\gamma,\alpha,\theta)}{\partial\theta}=\frac{\partial A(\theta)}{\partial \theta}\{\gamma(1,x)-\gamma(0,x)\} +\alpha(z,x)\frac{\partial A(\theta)}{\partial \theta}\{v-\gamma(z,x)\}
$$
where $\partial A(\theta)/\partial \theta$ is a tensor consisting of 1s and 0s.

To lighten the proof, we slightly abuse notation as follows:
\begin{align*}
    \|\gamma_0-\hat{\gamma}_{-\ell}\|^2&=E[\{\gamma_0(Z,X)-\hat{\gamma}_{-\ell}(Z,X)\}^2\mid I_{\ell}];\\
    \|\alpha_0-\hat{\alpha}_{-\ell}\|^2&=E[\{\alpha(Z,X)-\hat{\alpha}_{-\ell}(Z,X)\}^2\mid I_{\ell}].
\end{align*}

\begin{enumerate}
    \item It suffices to show the second moment of the derivative is finite. By triangle inequality and Assumption~\ref{kappa_regularity} we have
\begin{align*}
    &\left\| \frac{\partial A(\theta_0)}{\partial \theta}\{\gamma_0(1,x)-\gamma_0(0,x)\} +\alpha_0(z,x)\frac{\partial A(\theta)}{\partial \theta}\{v-\gamma_0(z,x)\} \right\|\\
    &\quad\leq  \frac{\partial A(\theta_0)}{\partial \theta}\left\{\|\gamma_0(1,x)-\gamma_0(0,x)\|+CC'\right\}.
\end{align*}
To bound the right hand side, by Lemma~\ref{regularity} we have
$$
\|\gamma_0(1,x)-\gamma_0(0,x)\|\leq \|\gamma_0(1,x)\|+\|\gamma_0(0,x)\|\leq C\|\gamma_0\|<\infty.
$$
    \item
    \begin{enumerate}
        \item The convergence holds due to Assumption~\ref{gamma_hat}.
        \item The convergence holds due to  Lemma~\ref{dense_rate} or Lemma~\ref{sparse_rate}.
        \item Differentiability holds since $\partial \psi(w,\gamma,\alpha,\theta)/\partial\theta$ does not depend on $\theta$.
        \item The left hand side is exactly $\vec{0}$ since $\partial \psi(w,\gamma,\alpha,\theta)/\partial\theta$ does not depend on $\theta$.
    \end{enumerate}
    \item It suffices to analyze the difference
    \begin{align*}
        \xi&=\hat{\gamma}_{-\ell}(1,x)-\hat{\gamma}_{-\ell}(0,x)+\hat{\alpha}_{-\ell}(z,x)\{v-\hat{\gamma}_{-\ell}(z,x)\} \\
        &\quad-\left[\gamma_0(1,x)-\gamma_0(0,x)+\alpha_0(z,x)\{v-\gamma_0(z,x)\}\right] \\
        &=\hat{\gamma}_{-\ell}(1,x)-\gamma_0(1,x) \\
        &\quad-\hat{\gamma}_{-\ell}(0,x)+\gamma_0(0,x) \\
        &\quad+\hat{\alpha}_{-\ell}(z,x)\{v-\hat{\gamma}_{-\ell}(z,x)\}-\alpha_0(z,x)\{v-\hat{\gamma}_{-\ell}(z,x)\} \\
        &\quad+\alpha_0(z,x)\{v-\hat{\gamma}_{-\ell}(z,x)\}-\alpha_0(z,x)\{v-\gamma_0(z,x)\} \\
        &=\hat{\gamma}_{-\ell}(1,x)-\gamma_0(1,x) \\
        &\quad-\hat{\gamma}_{-\ell}(0,x)+\gamma_0(0,x) \\
        &\quad+\{\hat{\alpha}_{-\ell}(z,x)-\alpha_0(z,x)\}\{v-\gamma_0(z,x)\}\\
        &\quad+\{\hat{\alpha}_{-\ell}(z,x)-\alpha_0(z,x)\}\{\gamma_0(z,x)-\hat{\gamma}_{-\ell}(z,x)\}\\
        &\quad+\alpha_0(z,x)\{\gamma_0(z,x)-\hat{\gamma}_{-\ell}(z,x)\}
    \end{align*}
    where we use the decomposition
\begin{align*}
&\hat{\alpha}_{-\ell}(z,x)\{v-\hat{\gamma}_{-\ell}(z,x)\}-\alpha_0(z,x)\{v-\hat{\gamma}_{-\ell}(z,x)\}\\
&=\{\hat{\alpha}_{-\ell}(z,x)-\alpha_0(z,x)\}\{v-\gamma_0(z,x)+\gamma_0(z,x)-\hat{\gamma}_{-\ell}(z,x)\}.
\end{align*}
Hence
\begin{align*}
    E\left(|\xi|\right)
    &\leq E\left\{|\hat{\gamma}_{-\ell}(1,X)-\gamma_0(1,X)|\right\} \\
    &\quad +E\left\{|\hat{\gamma}_{-\ell}(0,X)-\gamma_0(0,X)|\right\} \\
    &\quad +E\left[|\{\hat{\alpha}_{-\ell}(Z,X)-\alpha_0(Z,X)\}\{V-\gamma_0(Z,X)\}|\right] \\
    &\quad +E\left[|\{\hat{\alpha}_{-\ell}(Z,X)-\alpha_0(Z,X)\}\{\gamma_0(Z,X)-\hat{\gamma}_{-\ell}(Z,X)\}|\right]\\
    &\quad +E\left[|\alpha_0(Z,X)\{\gamma_0(Z,X)-\hat{\gamma}_{-\ell}(Z,X)\}|\right].
\end{align*}
Consider the first term. Under Assumption~\ref{gamma_hat}, applying law of iterated expectation, Jensen's inequality, and Lemma~\ref{regularity}, we have
\begin{align*}
    E\left\{|\hat{\gamma}_{-\ell}(1,X)-\gamma_0(1,X)|\right\}
    &=E\left[E\left\{|\hat{\gamma}_{-\ell}(1,X)-\gamma_0(1,X)| \mid  I_{-\ell}\right\}\right]
    \\
    &\leq E\{\|\hat{\gamma}_{-\ell}(1,x)-\gamma_0(1,x)\|\}\\
    &\leq CE(\|\hat{\gamma}_{-\ell}-\gamma_0\|) \\ 
    &=o_p(1).
\end{align*}
Likewise for the second term. Consider the third term. Under Assumption~\ref{kappa_regularity}, applying law of iterated expectation, Lemma~\ref{dense_rate} or Lemma~\ref{sparse_rate}, and H\"older's inequality we have
\begin{align*}
    &E[|\{\hat{\alpha}_{-\ell}(Z,X)-\alpha_0(Z,X)\}\{V-\gamma_0(Z,X)\}|]\\
    &=E\left(E[|\{\hat{\alpha}_{-\ell}(Z,X)-\alpha_0(Z,X)\}\{V-\gamma_0(Z,X)\}| \mid  I_{-\ell}]\right) \\
    &
    \leq E\left\{\|\hat{\alpha}_{-\ell}-\alpha_0\|\|v-\gamma_0(z,x)\|\right\}\\
    &\leq C E(\|\hat{\alpha}_{-\ell}-\alpha_0\|) \\
    &=o_p(1).
\end{align*}
Consider the fourth term. By law of iterated expectations, H\"older's inequality, and Corollary~\ref{product}  we have
\begin{align*}
    &E\left[|\{\hat{\alpha}_{-\ell}(Z,X)-\alpha_0(Z,X)\}\{\gamma_0(Z,X)-\hat{\gamma}_{-\ell}(Z,X)\}|\right] \\
    &=E\left(E\left[|\{\hat{\alpha}_{-\ell}(Z,X)-\alpha_0(Z,X)\}\{\gamma_0(Z,X)-\hat{\gamma}_{-\ell}(Z,X)\}|\mid  I_{-\ell}\right]\right) \\
    &\leq E\left(\|\hat{\alpha}_{-\ell}-\alpha_0\|\|\gamma_0-\hat{\gamma}_{-\ell}\|\right) \\
    &=o_p(1).
\end{align*}
Consider the fifth term. By law of iterated expectations, Assumptions~\ref{gamma_hat} and~\ref{kappa_regularity}, and Jensen's inequality, we have
\begin{align*}
    E\left[|\alpha_0(Z,X)\{\gamma_0(Z,X)-\hat{\gamma}_{-\ell}(Z,X)\}|\right]
    &=E\left(E\left[|\alpha_0(Z,X)\{\gamma_0(Z,X)-\hat{\gamma}_{-\ell}(Z,X)\}|\mid  I_{-\ell}\right]\right) \\
    &\leq C E\left[E\left\{|\gamma_0(Z,X)-\hat{\gamma}_{-\ell}(Z,X)|\mid  I_{-\ell}\right\}\right]\\
    &\leq CE(\|\gamma_0-\hat{\gamma}_{-\ell}\|)\\
    &=o_p(1).
\end{align*}
\end{enumerate}
\end{proof}

\begin{proposition}\label{consistency}
Suppose the conditions of Theorem~\ref{asymptotic_normality} hold. Then $\hat{\theta}=\theta_0+o_p(1)$.
\end{proposition}

\begin{proof}
We verify the four conditions of Lemma~\ref{extremum} with
\begin{align*}
    Q_0(\theta)&=E\{\psi_0(\theta)\}^{\top}E\{\psi_0(\theta)\}, \\
    \hat{Q}(\theta)&=\left\{\frac{1}{n}\sum_{\ell=1}^L \sum_{i\in I_{\ell }}\hat{\psi}_i(\theta)\right\}^{\top}\frac{1}{n}\sum_{\ell=1}^L \sum_{i\in I_{\ell }}\hat{\psi}_i(\theta), \\
    \psi_0(\theta)&=\psi(W,\gamma_0,\alpha_0,\theta), \\
    \hat{\psi}_i(\theta)&= \psi(W_i,\hat{\gamma}_{-\ell},\hat{\alpha}_{-\ell},\theta).
\end{align*}
\begin{enumerate}
    \item The first condition follows from Assumption~\ref{kappa_regularity},
    \item The second condition follows from Corollary~\ref{cor:LATE}.
    \item The third condition follows from Corollary~\ref{cor:LATE}.
    \item Define
\begin{align*}
    \eta_0(w)&=\gamma_0(1,x)-\gamma_0(0,x)+\alpha_0(z,x)\{v-\gamma_0(z,x)\}\\
    \hat{\eta}_{-\ell}(w)&=\hat{\gamma}_{-\ell}(1,x)-\hat{\gamma}_{-\ell}(0,x)+\hat{\alpha}_{-\ell}(z,x)\{v-\hat{\gamma}_{-\ell}(z,x)\}.
\end{align*}
It follows that for $i\in I_{\ell}$,
\begin{align*}
    \psi_0(\theta)&=A(\theta)\eta_0(W),\quad E\{\psi_0(\theta)\}=A(\theta)E\{\eta_0(W)\};\\
    \hat{\psi}_i(\theta)&=A(\theta)\hat{\eta}_{-\ell}(W_i),\quad \frac{1}{n}\sum_{\ell=1}^L \sum_{i\in I_{\ell}} \hat{\psi}_i(\theta)=A(\theta)\frac{1}{n}\sum_{\ell=1}^L \sum_{i\in I_{\ell}} \hat{\eta}_{-\ell}(W_i).
\end{align*}
It suffices to show $n^{-1}\sum_{\ell=1}^L \sum_{i\in I_{\ell}} \hat{\eta}_{-\ell}(W_i) = E\{\eta_0(W)\}+o_p(1)$ since by continuous mapping theorem this implies that for all $\theta$ in $\Theta$, $n^{-1}\sum_{\ell=1}^L \sum_{i\in I_{\ell}} \hat{\psi}_i(\theta)=E\{\psi_0(\theta)\}+o_p(1)$ and hence $\hat{Q}(\theta)=Q_0(\theta)+o_p(1)$ uniformly.

We therefore turn to proving the sufficient condition. Write
\begin{align*}
&\frac{1}{n}\sum_{\ell=1}^L \sum_{i\in I_{\ell}} \hat{\eta}_{-\ell}(W_i)-E\{\eta_0(W)\}\\
&=\quad 
\frac{1}{n}\sum_{\ell=1}^L \sum_{i\in I_{\ell}} \{\hat{\eta}_{-\ell}(W_i)-\eta_0(W_i)\}+
\frac{1}{n}\sum_{\ell=1}^L \sum_{i\in I_{\ell}}\eta_0(W_i)-E\{\eta_0(W)\}.    
\end{align*}

Consider the initial terms. Denote $\xi_i=\hat{\eta}_{-\ell}(W_i)-\eta_0(W_i)$ as in Proposition~\ref{local7} item 3. We prove convergence in mean by
\begin{align*}
    E\left( \left|\frac{1}{n}\sum_{\ell=1}^L\sum_{i\in I_{\ell}}\xi_i\right|\right)
    &\leq \sum_{\ell=1}^L E\left(\frac{1}{n}\sum_{i\in I_{\ell}} |\xi_i|\right) \\
    &=\sum_{\ell=1}^L E\left\{E\left(\frac{1}{n}\sum_{i\in I_{\ell}} |\xi_i|\mid I_{-\ell} \right)\right\}\\
    &=\sum_{\ell=1}^L E\left\{\frac{n_{\ell}}{n} E(|\xi_i|\mid I_{-\ell} )\right\} \\
    &\leq \sum_{\ell=1}^L E\left\{ E(|\xi_i|\mid I_{-\ell} )\right\}  \\
    &=o_p(1)
\end{align*}
where the first inequality is due to triangle inequality, the second equality is due to the law of iterated expectations, and the rest echoes the proof of Proposition~\ref{local7} item 3.

Consider the latter terms. By the weak law of large numbers, if $E\{\eta_0(W)^2\}<\infty$ then
$$
\frac{1}{n}\sum_{\ell=1}^L \sum_{i\in I_{\ell}}\eta_0(W_i)-E\{\eta_0(W)\}=\frac{1}{n}\sum_{i=1}^n \eta_0(W_i)-E\{\eta_0(W)\}=o_p(1).
$$
To finish the argument, we verify $E\{\eta_0(W)^2\}=\|\eta_0\|^2<\infty$. By triangle inequality, Assumption~\ref{kappa_regularity}, and Lemma~\ref{regularity},
$$
  \|\eta_0\|=\|\gamma_0(1,x)-\gamma_0(0,x)+\alpha_0(z,x)\{v-\gamma_0(z,x)\}\| \leq \|\gamma_0(1,x)-\gamma_0(0,x)\|+CC'.
    $$
    To bound the right hand side, appeal to Lemma~\ref{regularity}:
$$
\|\gamma_0(1,x)-\gamma_0(0,x)\|\leq \|\gamma_0(1,x)\|+\|\gamma_0(0,x)\|\leq C\|\gamma_0\|<\infty.
$$
\end{enumerate}
\end{proof}

\begin{proposition}\label{local8}
Suppose the conditions of Theorem~\ref{asymptotic_normality} hold. Then the following holds.
\begin{enumerate}
    \item $\hat{\theta}=\theta_0+o_p(1)$, 
    \item $J^{\top}J$ is nonsingular,
    \item $E\{\psi_0(W)^2\}<\infty$,
    \item $E[\{
    \phi(W,\hat{\gamma}_{-\ell},\hat{\alpha}_{-\ell},\theta_0)
    -\phi(W,\hat{\gamma}_{-\ell},\alpha_0,\theta_0)
    -\phi(W,\gamma_0,\hat{\alpha}_{-\ell},\theta_0)
    +\phi(W,\gamma_0,\alpha_0,\theta_0)\}^2]=o_p(1)$.
\end{enumerate}
\end{proposition}

\begin{proof}
As in the proof of Proposition~\ref{local5}, we can write
\begin{align*}
 &\phi(w,\hat{\gamma}_{-\ell},\hat{\alpha}_{-\ell},\theta_0)
    -\phi(w,\hat{\gamma}_{-\ell},\alpha_0,\theta_0)
    -\phi(w,\gamma_0,\hat{\alpha}_{-\ell},\theta_0)
    +\phi(w,\gamma_0,\alpha_0,\theta_0) \\&=-\{\hat{\alpha}_{-\ell}(z,x)-\alpha_0(z,x)\}A(\theta_0)\{\hat{\gamma}_{-\ell}(z,x)-\gamma_0(z,x)\}.
\end{align*}
To lighten the proof, we slightly abuse notation as follows:
\begin{align*}
    \|\gamma_0-\hat{\gamma}_{-\ell}\|^2&=E[\{\gamma_0(Z,X)-\hat{\gamma}_{-\ell}(Z,X)\}^2\mid I_{\ell}];\\
    \|\alpha_0-\hat{\alpha}_{-\ell}\|^2&=E[\{\alpha(Z,X)-\hat{\alpha}_{-\ell}(Z,X)\}^2\mid I_{\ell}].
\end{align*}
\begin{enumerate}
    \item Convergence holds due to Proposition~\ref{consistency}.
    \item Nonsingularity holds due to Assumption~\ref{kappa_regularity}.
    \item $E\{\psi_0(W)^2\}<\infty$ is immediate from $E\{\eta_0(W)^2\}$, which is proved in Proposition~\ref{consistency} item 4.
    \item It suffices to analyze
    \begin{align*}
        &E\left(\left[\{\hat{\alpha}_{-\ell}(z,x)-\alpha_0(z,x)\}A(\theta_0)\{\hat{\gamma}_{-\ell}(z,x)-\gamma_0(z,x)\}\right]^2\right) \\
        &=E\left\{E\left(\left[\{\hat{\alpha}_{-\ell}(z,x)-\alpha_0(z,x)\}A(\theta_0)\{\hat{\gamma}_{-\ell}(z,x)-\gamma_0(z,x)\}\right]^2\mid  I_{-\ell}\right)\right\} \\
        &=
        E\left\{\|(\hat{\alpha}_{-\ell}-\alpha_0)A(\theta_0)(\hat{\gamma}_{-\ell}-\gamma_0)\|^2\right\}\\
        &\leq 2E\left\{\|\hat{\alpha}_{-\ell}A(\theta_0)(\hat{\gamma}_{-\ell}-\gamma_0)\|^2+\|\alpha_0A(\theta_0)(\hat{\gamma}_{-\ell}-\gamma_0)\|^2\right\}.
    \end{align*}
        Consider the first term. By H\"older's inequality, Assumption~\ref{B_b}, and either Lemma~\ref{dense_rate} or Lemma~\ref{sparse_rate}, we have
        $$
        |\hat{\alpha}_{-\ell}(z,x)|=|\hat{\rho}_{-\ell}^{\top}b(z,x)|\leq |\hat{\rho}_{-\ell}|_1|b(z,x)|_{\infty} =O_p(1).
        $$
        It follows by Assumption~\ref{gamma_hat} that
        $$
        \|\hat{\alpha}_{-\ell}A(\theta_0)(\hat{\gamma}_{-\ell}-\gamma_0)\|=O_p(1) \|\hat{\gamma}_{-\ell}-\gamma_0\|=O_p(n^{-d_{\gamma}})=o_p(1).
        $$
       Consider the second term. By Assumption~\ref{gamma_hat} and Assumption~\ref{kappa_regularity}, we have
    $$
    \|\alpha_0A(\theta_0)(\hat{\gamma}_{-\ell}-\gamma_0)\|\leq CA(\theta_0) \|\hat{\gamma}_{-\ell}-\gamma_0\|=o_p(1).
    $$
\end{enumerate}
\end{proof}

\begin{proof}[Proof of Theorem~\ref{asymptotic_normality}]
The proof now follows from \citet[Theorems 16 and 17]{chernozhukov2016locally}. In particular, Proposition~\ref{local4} verifies \citet[Assumption 4]{chernozhukov2016locally}, Proposition~\ref{local5} verifies \citet[Assumption 5]{chernozhukov2016locally}, Proposition~\ref{local6} verifies \citet[Assumption 6]{chernozhukov2016locally}, Proposition~\ref{local7} verifies \citet[Assumption 7]{chernozhukov2016locally}, and Proposition~\ref{local8} verifies the additional conditions in \citet[Theorems 16 and 17]{chernozhukov2016locally}. Finally, the parameter $\theta_0$ is exactly identified; $J$ is a square matrix, the GMM weighting can be taken as the identity matrix, so the formula for the asymptotic covariance matrix simplifies.
\end{proof}

\section{Tuning}\label{sec:tuning}

Algorithm~\ref{alg_stage1} takes as given the value of regularization parameter $\lambda_n$. For practical use, we provide an iterative tuning procedure to empirically determine $\lambda_n$. This is precisely the tuning procedure of \cite{chernozhukov2018learning}, adapted from \cite{chernozhukov2018dantzig}. Due to its iterative nature, the tuning procedure is most clearly stated as a replacement for Algorithm~\ref{alg_stage1}.

Recall that the inputs to Algorithm~\ref{alg_stage1} are observations in $I_{-\ell}$, i.e. excluding fold $\ell$. The analyst must also specify the $p$ dimensional dictionary $b$. For notational convenience, we assume $b$ includes the intercept in its first component: $b_1(z,x)=1$. In this tuning procedure, the analyst must further specify a low dimensional subdictionary $b^{\text{low}}$ of $b$. As in Algorithm~\ref{alg_stage1}, the output of the tuning procedure is $\hat{\alpha}_{-\ell}$, an estimator of the balancing weight trained only on observations in $I_{-\ell}$.

The tuning procedure is as follows.
\begin{algorithm}[Regularized balancing weight with tuning]\label{alg_tuning}
For observations in $I_{-\ell}$,
\begin{enumerate}
    \item Initialize $\hat{\rho}_{-\ell}$ using $b^{\text{low}}$: 
\begin{align*}
    \hat{G}^{\text{low}}_{-\ell}&=\frac{1}{n-n_{\ell}}\sum_{i\in I_{-\ell}} b^{\text{low}}(Z_i,X_i)b^{\text{low}}(Z_i,X_i)^{\top}; \\
    \hat{M}^{\text{low}}_{-\ell}&=\frac{1}{n-n_{\ell}}\sum_{i\in I_{-\ell}} b^{\text{low}}(1,X_i)-b^{\text{low}}(0,X_i); \\
    \hat{\rho}_{-\ell}&=\begin{Bmatrix} \left(\hat{G}^{\text{low}}_{-\ell}\right)^{-1}\hat{M}^{\text{low}}_{-\ell} \\ 0\end{Bmatrix}.
\end{align*}
    \item Calculate moments 
    \begin{align*}
    \hat{G}_{-\ell}&=\frac{1}{n-n_{\ell}}\sum_{i\in I_{-\ell}} b(Z_i,X_i)b(Z_i,X_i)^{\top}; \\
    \hat{M}_{-\ell}&=\frac{1}{n-n_{\ell}}\sum_{i\in I_{-\ell}} b(1,X_i)-b(0,X_i). 
\end{align*}
    \item While $\hat{\rho}_{-\ell}$ has not converged,
    \begin{enumerate}
  \item Update normalization
  $$
  \hat{D}_{-\ell}=\left(\frac{1}{n-n_{\ell}}\sum_{i\in I_{-\ell}} 
  [b(Z_i,X_i) b(Z_i,X_i)^{\top}\hat{\rho}_{-\ell}- \{b(1,X_i)-b(0,X_i)\}]^2
  \right)^{1/2}.
  $$
    \item Update $(\lambda_n,\hat{\rho}_{-\ell})$
    \begin{align*}
        \lambda_n&=\frac{c_1}{(n-n_{\ell})^{1/2}}\Phi^{-1}\left(1-\frac{c_2}{2p}\right); \\
        \hat{\rho}_{-\ell}&=\argmin_{\rho} \rho^{\top}\hat{G}_{-\ell}\rho-2\rho^{\top} \hat{M}_{-\ell}+2\lambda_n c_3 |\hat{D}_{-\ell,11} \rho_1|+2\lambda_n \sum_{j=2}^p |\hat{D}_{-\ell,jj}\rho_j|;
    \end{align*}
    where $\rho_j$ is the $j$th coordinate of $\rho$ and $\hat{D}_{-\ell,jj}$ is the $j$th diagonal entry of $\hat{D}_{-\ell}$.
    \end{enumerate}
    \item Set $\hat{\alpha}_{-\ell}(z,x)=b(z,x)^{\top}\hat{\rho}_{-\ell}$.
\end{enumerate}
\end{algorithm}

In step 1, $b^{low}$ is sufficiently low dimensional that $\hat{G}_{-\ell}^{\text{low}}$ is invertible. In practice, we take $dim(b^{low})=dim(b)/40$. 

In step 3,  $(c_1,c_2,c_3)$ are hyperparameters taken as $(1/2,0.1,0.1)$ in practice. We implement the optimization via generalized coordinate descent with soft thresholding. See \cite{chernozhukov2018learning} for a detailed derivation of this soft thresholding routine. In the optimization, we initialize at the previous value of $\hat{\rho}_{-\ell}$. For numerical stability, we use $\hat{D}_{-\ell}+0.2 I$ instead of $\hat{D}_{-\ell}$, and we cap the maximum number of iterations at 10.

We justify Algorithm~\ref{alg_tuning} in the same manner as \citet[Section 5.1]{chernozhukov2018dantzig}. Specifically, we appeal to \citet[Theorem 8]{belloni2013least} for the homoscedastic case and \citet[Theorem 1]{belloni2012sparse} for the heteroscedastic case.
\section{Simulations}\label{sec:sim}

\subsection{Simultaneous confidence band}

Suppose we wish to form a simultaneous confidence band for the components of $\hat{\theta}$, which may be the complier counterfactual outcome distribution based on a finite grid $\mathcal{U}$, which is a subset of $\mathcal{Y}$. The following procedure allows us to do so from some estimator $\hat{C}$ for the asymptotic variance $C$ of $\hat{\theta}$.  Let  $\hat{S}=diag(\hat{C})$ and $S=diag(C)$ collect the diagonal elements of these matrices.

\begin{algorithm}[Simultaneous confidence band]\label{alg_simultaneous}
Given $\hat{C}$,
\begin{enumerate}
    \item Calculate $\hat{\Sigma}=\hat{S}^{-1/2}\hat{C}\hat{S}^{-1/2}$.
    \item Sample $Q$ from $\mathcal{N}(0,\hat{\Sigma})$ and compute the value $\hat{c}_{a}$ as the $(1-a)$ quantile of sampled $|Q|_{\infty}$.
    \item Form the confidence band $$
    (l_j,u_j)=\left\{\hat{\theta}_j-\hat{c}_{a}(\hat{C}_{jj}/n)^{1/2},\ \hat{\theta}_j+\hat{c}_{a}(\hat{C}_{jj}/n)^{1/2}\right\}$$ 
    where $\hat{C}_{jj}$ is the diagonal entry of $\hat{C}$ corresponding to $j$th element $\hat{\theta}_j$ of $\theta$.
\end{enumerate}
\end{algorithm}

\begin{corollary}[Simultaneous confidence band]\label{thm_simultaneous} Suppose the
conditions of Theorem~\ref{asymptotic_normality} hold. Then for a fixed and finite grid $\mathcal{U}$, the confidence band in Algorithm~\ref{alg_simultaneous}
jointly covers the true counterfactual distributions $\theta_0$ at all grid points $y$ in $\mathcal{U}$ with
probability approaching the nominal level, i.e.
$
\text{\normalfont pr}\{(\theta_0)_j\in(l_j,u_j) \; \text{ for all } j\}\rightarrow 1-a.
$
\end{corollary}

\begin{proof}
Let $c_a$ be the $(1-a)$ quantile of $|\mathcal{N}(0,\Sigma)|_{\infty}$ where $\Sigma = S^{-1/2} C S^{-1/2}$ and $S=diag(C)$.  We first show that this critical value ensures correct asymptotic simultaneous coverage of confidence bands in the form of the rectangle     
$$\{(l_0)_j,(u_0)_j\}=\left\{\hat{\theta}_j-c_a\left(\frac{{C}_{jj}}{n}\right)^{1/2},\ \hat{\theta}_j+c_a\left(\frac{{C}_{jj}}{n}\right)^{1/2}\right\}$$
where ${C}_{jj}$ is the diagonal entry of ${C}$ corresponding to $j$th element $\hat{\theta}_j$ of $\theta$.

The argument is as follows. Denote $(l_0,u_0)=\times^{2d}_{j=1} \{(l_0)_j,(u_0)_j\}$ where $d=dim(\mathcal{U})$. Then the simultaneous coverage probability is
\begin{align*}
\text{\normalfont pr}\{\theta_0 \text{ is in } (l_0,u_0)\}
&=\text{\normalfont pr}\{n^{1/2}(\hat{\theta}-\theta_0)\text{ is in } S^{1/2}(-c_a,c_a)^{2d}\} \\
&\rightarrow \text{\normalfont pr}\{\mathcal{N}(0,C)\text{ is in } S^{1/2}(-c_a,c_a)^{2d}\} \\
&=\text{\normalfont pr}\{S^{-1/2}\mathcal{N}(0,C)\text{ is in } (-c_a,c_a)^{2d}\} \\
&=\text{\normalfont pr}\{|\mathcal{N}(0,\Sigma)|_{\infty}\leq c_a\}  \\
&= 1-a.
\end{align*}
Gaussian multiplier bootstrap is operationally equivalent to approximating $c_a$ with $\hat{c}_a$, calculated in Algorithm~\ref{alg_simultaneous}, which is based on the consistent estimator $\hat{C}$. 
\end{proof}

\subsection{Results}

We compare the performance of our proposed Auto-$\kappa$ estimator with $\kappa$ weighting  \citep{abadie2003semiparametric} and the original debiased machine learning with explicit propensity scores \citep{chernozhukov2018original} in simulations. We focus on counterfactual distributions as our choice of complier parameter $\theta_0$ over the grid $\mathcal{U}$ specified on the horizontal axis of Figure~\ref{fig:simulation}.

\begin{figure}[h]
\begin{centering}
 \includegraphics[width=\textwidth]{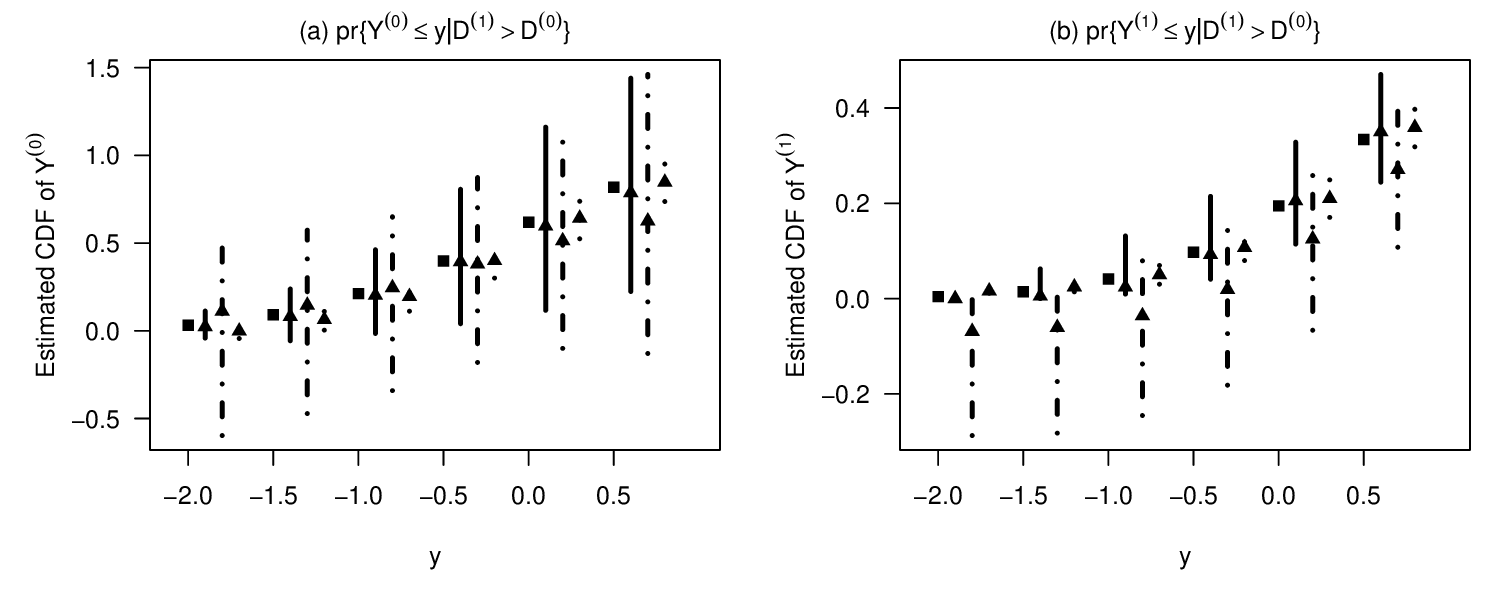}
 \end{centering}
\caption{\label{fig:simulation} Numerical stability simulation. Simulation performance of $\kappa$ weight (line), debiased machine learning (dot dash), and Auto-$\kappa$ (dots) estimators for the counterfactual distribution, where the grid point is specified on the horizontal axis.  The true values are solid squares. The vertical lines mark the 10\% and 90\% quantiles of the estimates across simulation draws and the solid triangles mark the median. }

\end{figure}

We consider a simple simulation design where  $Y$ is a continuous outcome, $D$ is a binary treatment, $Z$ is a binary instrumental variable,
and $X$ is a continuous covariate. We provide more details on the simulation design below. Each simulation consists of $n=1000$ observations. We conduct 1000 such simulations and implement each estimator as follows.

For the $\kappa$ weight, we estimate the propensity score $\hat{\pi}$ by logistic regression, which we then use in the weights $\hat{\kappa}^{(0)}(W),\hat{\kappa}^{(1)}(W)$ and subsequently the estimator $\hat{\theta}$. For debiased machine learning, we use five folds. We estimate the propensity score $\hat{\pi}$ by $\ell_1$ regularized logistic regression, using a dictionary of basis functions $b(X)$ consisting of fourth order polynomials of $X$. We estimate $\hat{\gamma}$ by lasso, using a dictionary of basis functions $b(Z,X)$ consisting of fourth order polynomials of $X$ and interactions between $Z$ and the polynomials. 

For Auto-$\kappa$, the key difference is that instead of estimating the propensity score, we directly estimate the balancing weight $\hat{\alpha}$ as described in Appendix~\ref{sec:estimation}, using a dictionary of basis functions $b(Z,X)$ consisting of fourth order polynomials of $X$ and interactions between $Z$ and the polynomials. Subsequently, we estimate $\hat{\theta}$ and construct simultaneous confidence bands by steps outlined above. Since the true propensity scores $\pi_0(X)$ are highly nonlinear, we expect $\kappa$ weighting and debiased machine learning to encounter issues of numerical instability. Furthermore, $\kappa$ weighting might not be as efficient as the debiased machine learning and Auto-$\kappa$ estimators, which have the semiparametrically efficient asymptotic variance.

\begin{table}[h]
\caption{Bias and RMSE simulation for $\text{\normalfont pr}\{Y^{(0)}\leq y\mid D^{(1)}>D^{(0)}\}$}
\begin{centering}
\begin{tabular}{ccccccc}
 \\
 & \multicolumn{3}{c}{Bias} & \multicolumn{3}{c}{RMSE} \\  
$y$ & $\kappa$ weight & DML & Auto-$\kappa$ & $\kappa$ weight & DML & Auto-$\kappa$ \\[5pt]
-2.0 & -3  & -138  &  -37 &  99 & 3070  & 75  \\
-1.5 &  -1 &  -119 &  -32 &  172 & 2576  & 76   \\
-1.0 &  3 &  -45 &  -20 &  250 & 2040  & 79  \\
-0.5 &  2 &  -35 & 2  & 384  & 1953  & 80  \\
0.0 &  -17 & 18  & 21  &  556 & 1738  & 92  \\
0.5 &  -12 &  3 &  34 &  638 & 3072  &  98 \\
overall & -5 & -53 & -5  &  350 & 2391  &  83 
\end{tabular}\\
\end{centering}
\label{tab:simulation0}
\textit{Notes:}
RMSE, root mean square error; DML, debiased machine learning; Auto-$\kappa$, automatic $\kappa$ weighting. All entries have been multiplied by $10^3$.
\end{table}

\begin{table}[h]
\caption{Bias and RMSE simulation for $\text{\normalfont pr}\{Y^{(1)}\leq y\mid D^{(1)}>D^{(0)}\}$}
\begin{centering}
\begin{tabular}{ccccccc}
 \\
 & \multicolumn{3}{c}{Bias} & \multicolumn{3}{c}{RMSE} \\  
$y$ & $\kappa$ weight & DML & Auto-$\kappa$ & $\kappa$ weight & DML & Auto-$\kappa$ \\[5pt]
-2.0 &  2  &  -115  & 13  &  28  &  444  &  15  \\
-1.5 &   4 & -114   &  12  &  39  &  441  & 16    \\
-1.0 &  8  & -110   &  11  &  57  & 432   &  20  \\
-0.5 &  16  & -103   & 11   &  78  & 410   &  26  \\
0.0 & 21   & -93   &  16  &  90  & 379   &  35  \\
0.5 &  21  &  -79  &  27  &  92  &  315  &  44  \\
overall &  12 & -102  &  15  &  64  &  403  &  26 \\
\end{tabular}\\
\end{centering}
\label{tab:simulation1}
\textit{Notes:}
RMSE, root mean square error; DML, debiased machine learning; Auto-$\kappa$, automatic $\kappa$ weighting. All entries have been multiplied by $10^3$.

\end{table}

For each value in the grid $\mathcal{U}$, Tables~\ref{tab:simulation0} and~\ref{tab:simulation1} present the bias and the root mean square error (RMSE) of each estimator across simulation draws. The last row averages the performance across grid points. Figure~\ref{fig:simulation}  visualizes the median as well as the 10\% and 90\% quantiles across simulation draws. Auto-$\kappa$ outperforms debiased machine learning by a large margin due to numerical stability. Even though Auto-$\kappa$ uses regularized machine learning to estimate $(\hat{\gamma},\hat{\alpha})$, regularization bias does not translate into bias for estimating the counterfactual distribution due to the doubly robust moment function. In terms of efficiency, Auto-$\kappa$ substantially outperforms $\kappa$ weighting. Lastly, the simultaneous confidence bands based on the Auto-$\kappa$ estimator have coverage probability of 98.4\% for the counterfactual distribution of $Y^{(0)}$ and 93.6\% for the counterfactual distribution of $Y^{(1)}$, which are quite close to the nominal level of 95\%.
 
Numerical instability from inverting $\hat{\pi}$ is a known issue.  In practice, researchers may try trimming and censoring. Trimming means excluding observations for which $\hat{\pi}$ is extreme. We trim according to \cite{belloni2017program}, dropping observations with $\hat{\pi}$ not in $(10^{-12}, 1-10^{-12})$. Censoring means imposing bounds on $\hat{\pi}$ for such observations. We censor by setting $\hat{\pi}<10^{-12}$ to be $10^{-12}$ and $\hat{\pi}>1-10^{-12}$ to be $1-10^{-12}$. Auto-$\kappa$ without trimming or censoring outperforms $\kappa$ weighting and debiased machine learning even with trimming and censoring in this simulation design. Compare Figure~\ref{fig:simulation}, which has no preprocessing, with Figure~\ref{fig:simulation_trim}, which has trimming, and Figure~\ref{fig:simulation_censor}, which has censoring, to see this phenomenon. This property is convenient, since ad hoc trimming and censoring have limited theoretical justification \citep{crump_dealing_2009}.

\begin{figure}[h] 

\begin{centering}
 \includegraphics[width=\textwidth]{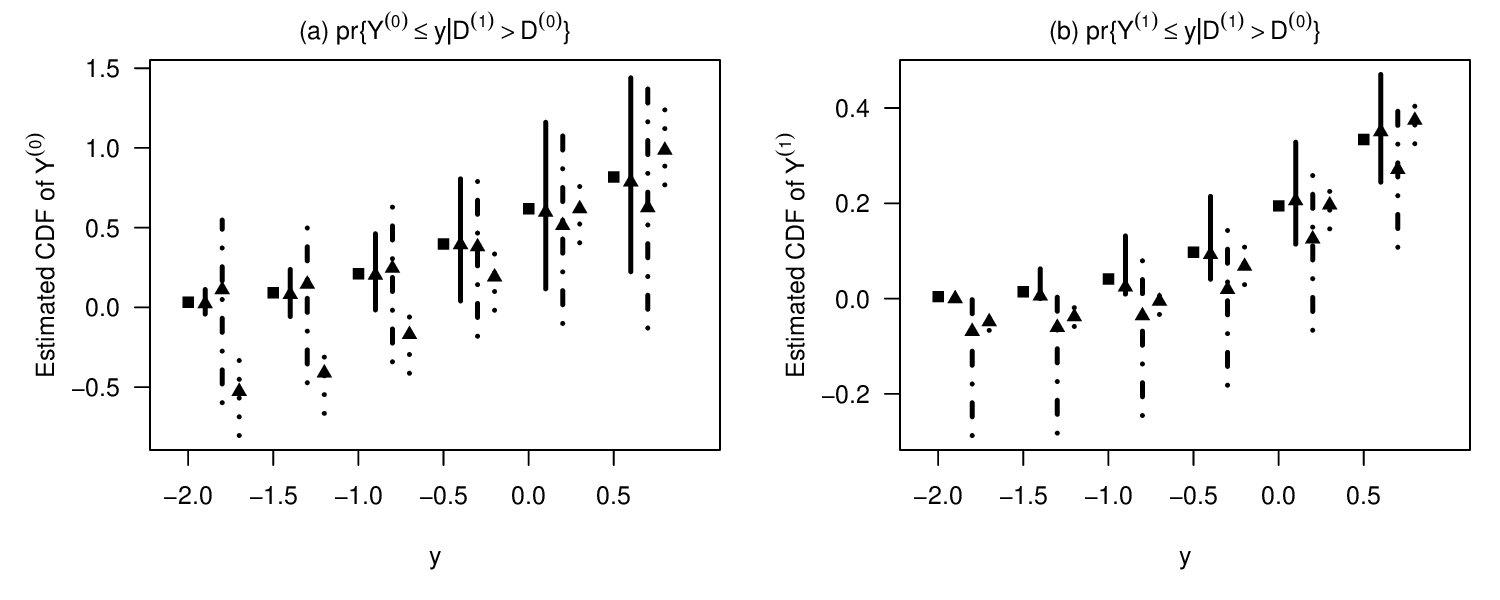}
 \end{centering}
\caption{\label{fig:simulation_trim}
Numerical stability simulation: Trimming. Simulation performance of $\kappa$ weight (line), debiased machine learning (dot dash), and Auto-$\kappa$ (dots) estimators for the counterfactual distribution, where the grid point is specified on the horizontal axis.  The true values are solid squares. The vertical lines mark the 10\% and 90\% quantiles of the estimates across simulation draws and the solid triangles mark the median. Observations with extreme propensity scores $\hat{\pi}$ not in $(10^{-12}, 1-10^{-12})$ are dropped.}
\end{figure}

\begin{figure}[h]

\begin{centering}
 \includegraphics[width=\textwidth]{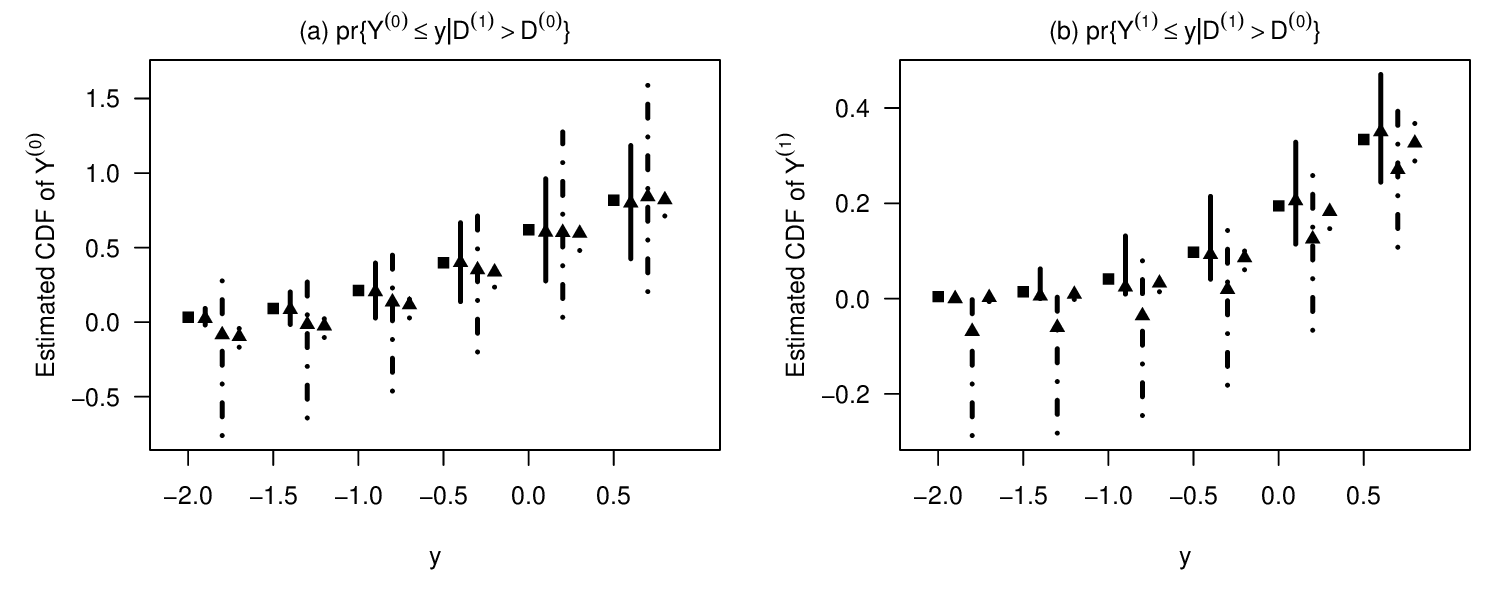}
 \end{centering}
\caption{\label{fig:simulation_censor}
Numerical stability simulation: Censoring. Simulation performance of $\kappa$ weight (line), debiased machine learning (dot dash), and Auto-$\kappa$ (dots) estimators for the counterfactual distribution, where the grid point is specified on the horizontal axis.  The true values are solid squares. The vertical lines mark the 10\% and 90\% quantiles of the estimates across simulation draws and the solid triangles mark the median. Observations with extreme propensity scores are censored by setting $\hat{\pi}<10^{-12}$ to be $10^{-12}$ and $\hat{\pi}>1-10^{-12}$ to be $1-10^{-12}$.}
\end{figure}

\subsection{Design}
Each simulation consists of a sample of $n=1000$  observations. A given observation is generated from the following model.
\begin{enumerate}
    \item Draw $X$ from $\mathcal{U}[0,1]$.
    \item Draw $Z\mid X=x$ from Bernoulli$\{\pi_0(x)\}$, where 
$
\pi_0(x) = (0.05) 1_{x\leq 0.5} + (0.95)  1_{x> 0.5}.
$
    \item Draw $D\mid Z=z,X=x$ from Bernoulli$(zx)$.
    \item Draw $Y\mid Z=z,X=x$ from $\mathcal{N}(2zx^{2},1)$.
\end{enumerate}
From observations of $W=(Y,D,Z,X^{\top})^{\top}$, we estimate complier counterfactual outcome distributions $\hat{\theta}=(\hat{\beta}^{\top},\hat{\delta}^{\top})^{\top}$ at a few grid points $y$ in $(-2.0,-1.5,-1.0,-0.5,1.0,0.5)$. 
The true parameter values are
\begin{align*}
\beta_0^y&=\frac{\int_{0}^{1}\{\Phi(y-2x^{2})(x-1)+\Phi(y)\}\mathrm{d}x}{\int_{0}^{1}x\mathrm{d}x},\quad
\delta_0^y=\frac{\int_{0}^{1}\{\Phi(y-2x^{2})x\}\mathrm{d}x}{\int_{0}^{1}x\mathrm{d}x}.
\end{align*}

\section{Application details}\label{sec:application}

 \cite{angrist1998children} estimate the impact of childbearing $D$ on female labor supply $Y$ in a sample of 394,840 mothers, aged 21--35 with at least two children, from the 1980 Census. The first instrument $Z_1$ is twin births: $Z_1$ indicates whether the mother's second and third children were twins. The second instrument $Z_2$ is same-sex siblings: $Z_2$ indicates whether the mother's initial two children were siblings with the same sex. The authors reason that both $(Z_1,Z_2)$ are quasi random events that induce having a third child such that the independence assumption holds unconditionally. However, the instruments are not independent of $X$, and therefore $\pi_{0}(X)$  still depends on $X$ and may be estimated.
 
 \cite{angrist2013extrapolate}  use parametric $\kappa$ weights to estimate two complier characteristics: (i) the average age of the mother's second child; and (ii) the years of schooling of the mother. For a given characteristic $f(X)=X$, the authors specify the instrument propensity score model as
 $$
 \pi_{0}(X)=[1+\exp\{-(\beta_0+\beta_1X+\beta_2X^2+\beta_3X^3+\beta_4X^4)\}]^{-1}.
 $$
 As discussed in Section~\ref{sec:insight}, such an approach is only valid when the parametric assumption on $\pi_{0}(X)$ is correct.  
 
 The semiparametric Auto-$\kappa$ approach we propose combines the doubly robust moment function from Theorems~\ref{thm:general_kappa} and~\ref{thm:general_kappa3} with the meta procedure in Algorithm~\ref{alg_dml} and the regularized balancing weights in Algorithm~\ref{alg_stage1}. We expand the dictionary of basis functions to include sixth order polynomials of $X$, and interactions between $Z$ and polynomials of $X$. We directly estimate and regularize both the regression $\hat{\gamma}$ and the balancing weights $\hat{\alpha}$, tuning the regularization according to Algorithm~\ref{alg_tuning}. We set the hyperparameters  $(c_1,c_2,c_3)$ as  $(0.5,0.1,0.1)$. In sample splitting, we partition the sample into five folds. The estimated balancing weights $\hat{\alpha}$ imply extreme twins-instrument propensity scores for a few observations.  We censor the extreme propensity scores by setting the implied $\hat{\pi}<10^{-12}$ to be $10^{-12}$ and $\hat{\pi}>1-10^{-12}$ to be $1-10^{-12}$.

 Finally, as a robustness check, we verify that $\kappa$ weighting and Auto-$\kappa$ yield similar estimated shares of compliers, i.e. similar estimates of $\text{\normalfont pr}\{D^{(1)}>D^{(0)}\}$. These shares are typically reported in empirical research to interpret the strength and relevance of an instrumental variable. In the language of two stage least squares, these estimates correspond to the first stage. Table~\ref{tab:2} reports the complier share estimates underlying the results of Table~\ref{tab:1}. Auto-$\kappa$ produces similar complier share estimates to the $\kappa$ weight approach of \cite{angrist2013extrapolate} while allowing for more flexible models and regularization.
 
\begin{table}[h]
\caption{Comparison of complier shares}
\begin{centering}
\begin{tabular}{lcccc}
 \\
 & \multicolumn{2}{c}{Average age of second child} & \multicolumn{2}{c}{Average schooling of mother} \\    
 & Twins & Same-sex   & Twins & Same-sex  \\[5pt]
 $\kappa$ weight & 0.60 & 0.06 &   0.60 & 0.06   \\
Auto-$\kappa$ & 0.73 & 0.06 &  0.76 & 0.06   
\end{tabular}\\
\end{centering}
\label{tab:2}
\end{table}

\end{document}